\theoremstyle{plain}
\newtheorem{theorem}{Theorem}[section]
\newtheorem{lemma}[theorem]{Lemma}
\theoremstyle{definition}
\newtheorem{definition}[theorem]{Definition}
\newtheorem{assumption}[theorem]{Assumption}
\theoremstyle{remark}
\begin{document}

\title{Optimizing Multi-Round Enhanced Training in Diffusion Models for Improved Preference Understanding}

\author{
  \begin{tabular}{c}
    Kun Li\textsuperscript{1*}, Jianhui Wang\textsuperscript{2*}, Yangfan He\textsuperscript{3*}, Xinyuan Song\textsuperscript{4}, 
    Ruoyu Wang\textsuperscript{5}, Hongyang He\textsuperscript{6}\\, Wenxin Zhang\textsuperscript{7}, 
    Jiaqi Chen\textsuperscript{8}, Keqin Li\textsuperscript{9}, Sida Li\textsuperscript{10}, 
    Miao Zhang\textsuperscript{5}, Tianyu Shi\textsuperscript{9\dag}, Xueqian Wang\textsuperscript{5} \\  
    \textsuperscript{1}Xiamen University \quad
    \textsuperscript{2}University of Electronic Science and Technology of China \quad \\  
    \textsuperscript{3}University of Minnesota—Twin Cities \quad
    \textsuperscript{4}Emory University \quad 
    \textsuperscript{5}Tsinghua University \quad \\ 
    \textsuperscript{6}University of Warwick \quad
    \textsuperscript{7}University of the Chinese Academy of Sciences  \\  
    \textsuperscript{8}George Washington University \quad
    \textsuperscript{9}University of Toronto \quad
    \textsuperscript{10}Peking University  
  \end{tabular}
}

\renewcommand{\shortauthors}{Trovato et al.}
\settopmatter{printacmref=false} 
\begin{abstract}
Generative AI has significantly changed industries by enabling text-driven image generation, yet challenges remain in achieving high-resolution outputs that align with fine-grained user preferences. Consequently, we need multi-round interactions to ensure the generated images meet their expectations. Previous methods focused on enhancing prompts to make the generated images fit with user needs using reward feedback, however, it hasn't considered optimization using multi-round dialogue dataset. In this research, We present a Visual Co-Adaptation (\textbf{VCA}) framework that incorporates human-in-the-loop feedback, utilizing a well-trained reward model specifically designed to closely align with human preferences. Leveraging a diverse multi-turn dialogue dataset, the framework applies multiple reward functions—such as diversity, consistency, and preference feedback—while fine-tuning the diffusion model through LoRA, effectively optimizing image generation based on user input. We also constructed multi-round dialogue datasets with prompts and image pairs that well fit user intent. Various experiments demonstrate the effectiveness of the proposed method over state-of-the-art baselines, with significant improvements in image consistency and alignment with user intent. Our approach consistently surpasses competing models in user satisfaction, particularly in multi-turn dialogue scenarios.
\end{abstract}

\maketitle
\vspace{-1em}

\def\thefootnote{*}\footnotetext{Equal contribution.}\def\thefootnote{\arabic{footnote}}
\def\thefootnote{\dag}\footnotetext{Corresponding author.}\def\thefootnote{\arabic{footnote}}

\section{Introduction}
\label{sec:intro}
Generative AI has become a significant driver of economic growth, enabling the optimization of both creative and non-creative tasks across a wide range of industries. Cutting-edge models such as DALL·E 2~\cite{ramesh2022hierarchical}, Imagen~\cite{saharia2022photorealistic}, and Stable Diffusion~\cite{rombach2022high} have shown extraordinary capabilities in generating unique, convincing, and lifelike images from textual descriptions. These models allow users to visualize ideas that were previously difficult to materialize, opening up new possibilities in content creation, design, advertising, and entertainment. However, there are still limitations in refining generative models. One of the key challenges lies in producing high-resolution images that more accurately reflect the semantics of the input text. Additionally, the complexity of current interfaces and the technical expertise required for effective prompt engineering remain significant obstacles for non-expert users. Many models still struggle with interpreting complex human instructions, leading to a gap between user expectations and the generated outputs. The opacity surrounding the impact of variable adjustments further complicates the user experience, particularly for those without a systematic understanding of the underlying mechanisms. This disconnect between technical complexity and user accessibility hinders a broader audience from fully leveraging the potential of these advanced AI tools. Therefore, there is a pressing need to develop more intuitive and user-friendly models that can open access to generative AI, enabling individuals from diverse backgrounds to engage in the creative process without requiring deep technical knowledge. 

To tackle these challenges, we propose an innovative approach that enhances the user experience by simplifying the interaction between users and generative models. Unlike traditional models that necessitate extensive knowledge of control elements, our approach employs the concept of human-in-the-loop co-adaptation~\cite{reddy2022first, he2024enhancing}. By integrating continuous user feedback, our model dynamically evolves to better meet user expectations. This feedback-driven adaptation not only improves the accuracy and relevance of the generated images but also reduces the learning curve for users, making the technology more accessible and empowering. Figure~\ref{fig:overall} depicts the overall workflow of our model, demonstrating how it incorporates user feedback to iteratively refine image generation. Our key contributions are summarized below:
\vspace{-1em}
\begin{figure}[!ht]
    \centering
    \includegraphics[width=\linewidth]{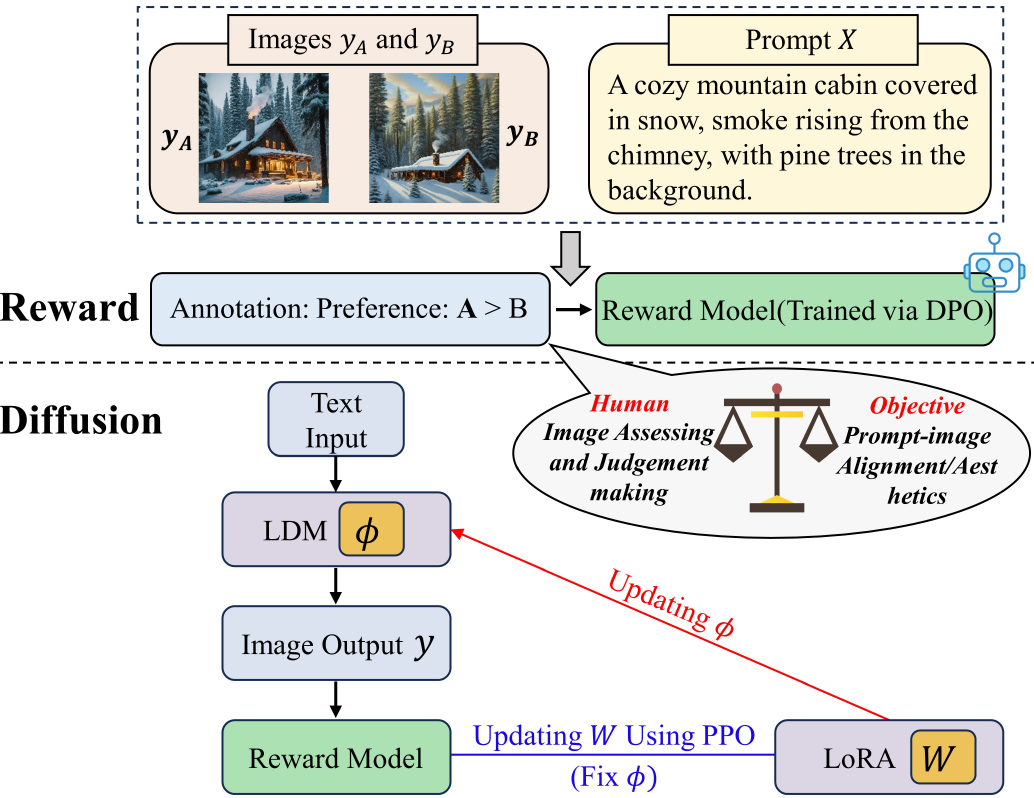} 
    \caption{The workflow demonstrates how human preferences guide text-to-image diffusion, with a DPO-trained reward model evaluating image-prompt alignment and PPO updating LoRA parameters while keeping the diffusion model fixed.}
    \label{fig:evaluation_diagram}
    \vspace{-1em}
\Description{}\end{figure}
\begin{figure*}[!ht]
    \centering
    \includegraphics[width=\textwidth]{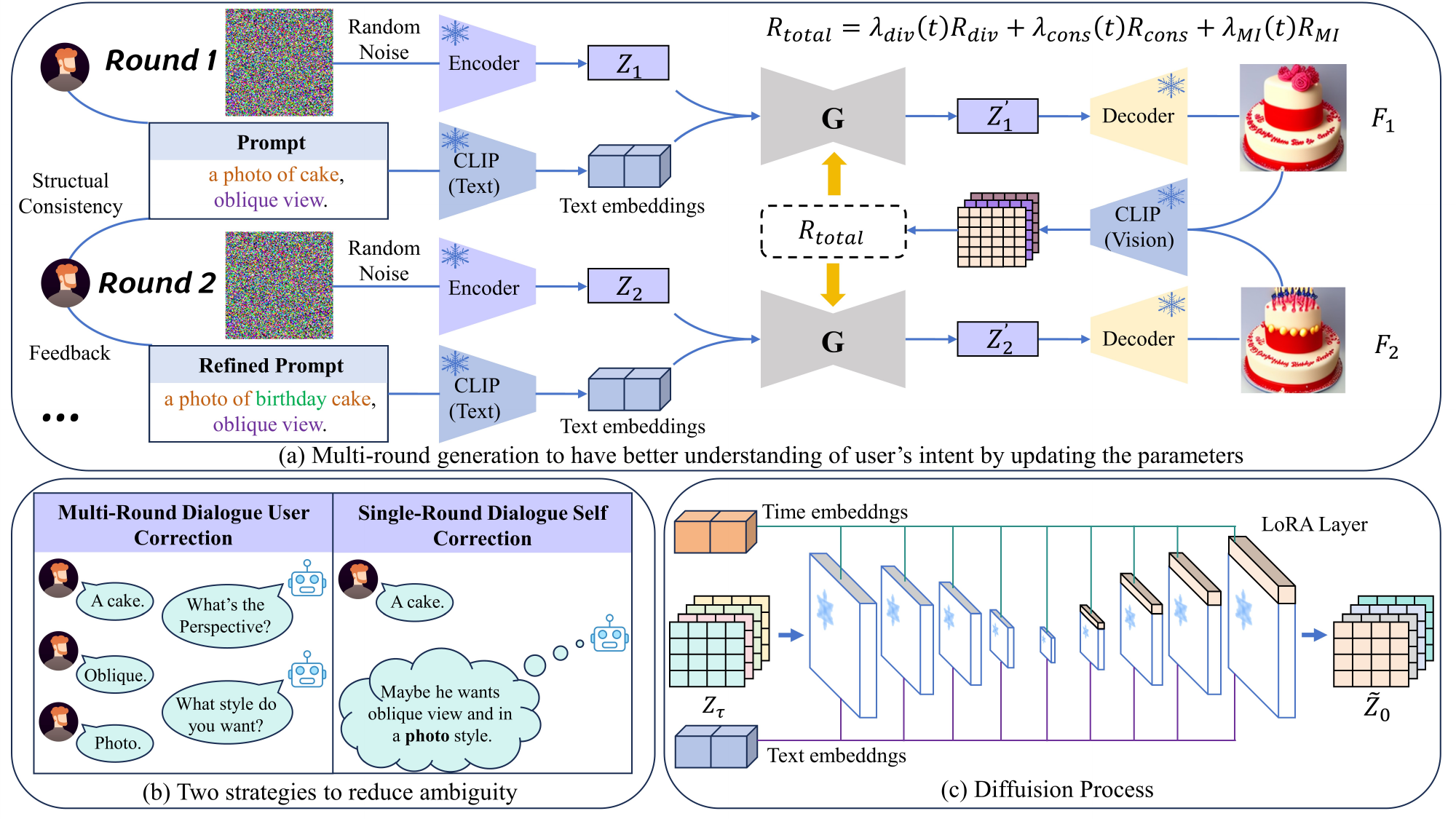}
    \caption{Overview of our multi-round dialogue generation process. (a) shows how prompts and feedback refine images over rounds. (b) compares multi-round user correction with single-round self-correction. (c) illustrates the diffusion process with LoRA layers and text embeddings. The total reward \(R_{\text{total}}\) balances diversity, consistency, and mutual information across rounds.}
    \label{fig:overall}
    \vspace{-1em}
\Description{}\end{figure*}  

\begin{itemize}[left = 0em]
    \vspace{1.1mm}
    \item We designed an extensive text-to-image dataset that considers diverse multi-turn dialogue topics, offering valuable resources for frameworks that capture human preferences.
    \item We introduced the Visual Co-Adaptation (VCA) framework with human feedback in loops, which refines user prompts using a pre-trained language model enhanced by Reinforcement Learning (RL) for the diffusion process to align image outputs more closely with user preferences. This leads to images that genuinely reflect individual styles and intentions, and maintain consistency through each round of generation.
    \item We demonstrate that mutual information maximization outperforms conventional RL in aligning model outputs with user preferences. In addition, we introduce an interactive tool that allows non-experts to create personalized, high-quality images, broadening the application of text-to-image technologies in creative fields.
    \item We provide the theoretical analysis and key insights underlying our method. These results offer a rigorous foundation that supports the effectiveness and applicability of our framework design.
\end{itemize}
\vspace{-1em}
\section{Related Work}
\subsection{Text-Driven Image Editing Framework}
Image editing through textual prompts has changed how users interact with images, making editing processes more intuitive. One of the seminal works is Prompt-to-Prompt (P2P)~\cite{hertz2022prompt}. The main idea behind P2P is aligning new information from the prompt with the cross-attention mechanism during the image generation process. P2P allows modifications to be made without retraining or adjusting the entire model, by modifying attention maps in targeted areas.

Expanding upon P2P, MUSE~\cite{chang2023muse} introduced a system that allows both textual and visual feedback during the generation process. This addition made the system more adaptable, enabling it to respond more effectively to user inputs, whether given as text or through visual corrections. Building on these developments, the Dialogue Generative Models framework~\cite{huang2024dialoggen} integrated dialogue-based interactions, enabling a conversation between the user and the model to iteratively refine the generated image. This approach improves alignment with user preferences through multiple interactions.

Prompt Charm~\cite{wang2024promptcharm} refined prompt embeddings to offer more precise control over specific image areas without requiring retraining of the entire model. More recently, ImageReward~\cite{xu2024imagereward} introduced a technique that utilizes human feedback to improve reward models, applying iterative adjustments to image outputs to align them more closely with user preferences and emphasize stronger text-to-image coherence~\cite{liang2023rich,lee2023aligning}.

\subsection{Text-to-Image Model Alignment with Human Preferences}
Following approaches like ImageReward, reinforcement learning from human feedback (RLHF) has become a key method for aligning text-to-image generation models with user preferences. RLHF refines models based on user feedback, as shown in works such as Direct Preference Optimization (DPO)~\cite{rafailov2024directpreferenceoptimizationlanguage}, Proximal Policy Optimization (PPO)~\cite{schulman2017proximalpolicyoptimizationalgorithms}, and Reinforcement Learning with Augmented Inference Feedback (RLAIF)~\cite{lee2023rlaifscalingreinforcementlearning}. These methods convert human feedback into reward signals, allowing the model to iteratively update its parameters and produce images that align with human preferences.

Achieving efficient adaptation, we adopt a LoRA-based framework, where the pre-trained weight matrices are updated via low-rank adaptation. This approach allows us to insert lightweight adapter modules into the network's attention layers for fine-tuning without modifying the full set of model parameters~\cite{xin2024v,xin2024vmt,xin2024mmap,xin2024parameter}. Recent work on enhancing intent understanding for ambiguous prompts~\cite{he2024enhancing} introduced a human-machine co-adaptation strategy that uses mutual information between user prompts and generated images, further improving alignment with user preferences in multi-round dialogue scenarios. For the models themselves, fine-tuning techniques like LoRA (Low Rank Adaptation)~\cite{hu2021loralowrankadaptationlarge} have received attention for their efficiency in updating large pre-trained models. LoRA updates a model in a low-rank subspace, preserving the original weights, which allows parameter changes to be driven by user feedback without large-scale retraining. QLoRA~\cite{dettmers2023qloraefficientfinetuningquantized} extends this by introducing 4-bit quantization to reduce memory usage, making it possible to fine-tune large models even on limited hardware. Furthermore, LoraHub~\cite{huang2024lorahubefficientcrosstaskgeneralization} enables dynamic composition of fine-tuned models for specific tasks. By combining RLHF with LoRA in our model, we can more effectively align text-to-image generation with human intent.
\vspace{-1em}
\section{Method}
\subsection{Multi-round Diffusion Process}
\label{sec:multi_round_diffusion}
The multi-round diffusion~\cite{rombach2022highresolutionimagesynthesislatent} framework introduces Gaussian noise \( \epsilon \sim \mathcal{N}(0, I) \) at each iteration to iteratively denoise latent variables \( z_t \) based on human feedback, generating progressively refined images through time-step updates. This process integrates user feedback in the prompt refinement:
\begin{equation}
P_t = \mathcal{G}(\mathcal{F}_{\text{LLM}}(\text{prompt} + \nabla_{\text{feedback}}), P_{t-1}),
\end{equation}
where the initial prompt (\(\text{prompt}\)) is refined by the LLM (\(\mathcal{F}_{\text{LLM}}\)) with feedback adjustments (\(\nabla_{\text{feedback}}\)), then aligned with the previous context (\(P_{t-1}\)) by the LLM (\(\mathcal{G}\)) to yield a new output \( P_t \). 
The refined prompt embedding \( \psi_t(P_t) \) modifies the cross-attention map, guiding the diffusion model \( G \) to iteratively denoise latent variables \( z_{\tau-1}^{(t)} \) at each timestep \(\tau\), incorporating human feedback throughout:
\begin{equation}
z_{\tau-1}^{(t)} = \text{DM}^{(t, \tau)}\left(z_\tau^{(t)}, \psi_t(P_t), \tau\right),
\end{equation}
As this framework proceeds, Gaussian noise is applied in multiple rounds using independent distributions. Noise steps \(\tau_1\) and \(\tau_2\) differ across rounds to diversify denoising paths. The final image \(\tilde{z}_{0}^{y}\) is then obtained by:
\begin{equation}
\tilde{z}_{0}^{y} = G_0^{\tau_2}(G_0^{\tau_1}(z_{\tau_1}^{x}, c_1, \psi(P)) + \mathcal{N}(\tau_2), c_2, \psi(P)),
\end{equation}
The latent variable \( z_{\tau_1}^x \) is generated by applying Gaussian noise at step \( \tau_1 \) to the input image \( x \) during a specific dialogue round. The ground-truth latent \( z_0^y \), extracted from the target image \( y \) in the subsequent dialogue round, serves as the reference for reconstruction. The prompt embedding, denoted by \( \psi(P) \), corresponds to some dialogue round. This reconstruction process is guided by the feature-level generation loss, as illustrated in Figure \ref{fig:evaluation_diagram}.
\begin{equation}
L_{\text{multi}} = \left\|z_0^y - G_0^{\tau_2}\Big(G_0^{\tau_1}\left(z_{\tau_1}^x, c_1, \psi(P)\right) + \mathcal{N}(\tau_2), c_2, \psi(P)\Big)\right\|
\end{equation}
One can simplify this by focusing on one-step reconstruction of the last timestep \( z_{\tau_2 - 1}^{y} \):
\begin{equation}
L_{\text{multi}} = \|z_{\tau_2 - 1}^{y} - G_{\tau_2 - 1}^{\tau_2}(z_{\tau_2}^{xy}, c_2,  \psi(P))\|,
\end{equation}
which simplifies optimization by allowing the loss gradient to backpropagate across rounds, ensuring coherent reconstruction and alignment of generated results with user feedback in the initial prompt.

\noindent\textbf{Theoretical Analysis.} Theorem~\ref{thms:conditional_convergence} shows that, under suitable assumptions, the distribution of the latent variables converges in total variation norm to the target distribution \(p_{\text{target}}(z)\) as the number of rounds increases. This provides a theoretical guarantee that the user’s intended content can be accurately realized over multiple rounds of diffusion, thereby validating the effectiveness of our method for tasks requiring iterative feedback and refinement.

\begin{theorem}[Conditional Convergence of Multi-Round Diffusion Process]
\label{thms:conditional_convergence}
Given a user feedback sequence \(\{\nabla_{\text{feedback}}^{(t)}\}_{t=1}^T\) that generates prompt sequences \(\{P_t\}_{t=1}^T\) via the language model \(\mathcal{F}_{\text{LLM}}\), assume there exists an ideal prompt \(P_{\text{target}}\) such that \(\psi(P_{\text{target}})\) perfectly aligns with user intent. Define the latent variable sequence \(\{z_t\}_{t=1}^T\) of the multi-round diffusion process recursively as:
\begin{equation}
z_{t-1} 
= \text{DM}^{(t)}\bigl(z_t, \psi(P_t)\bigr) 
  + \epsilon_t, 
\quad \epsilon_t 
\sim \mathcal{N}\bigl(0, \sigma_t^2 I\bigr),
\end{equation}
where \(\text{DM}^{(t)}\) is the diffusion model at round \(t\), and \(\psi(\cdot)\) is the prompt embedding function. Under Assumptions~\ref{assumption:prompt_convergence}, \ref{assumption:diffusion_stability}, and \ref{assumption:noise_decay}, as \(T \to \infty\), the generated distribution \(p(z_T)\) converges to the target distribution \(p_{\text{target}}(z)\) in total variation norm:
\begin{equation}
\lim_{T \to \infty}
\bigl\|p(z_T) - p_{\text{target}}(z)\bigr\|_{\text{TV}}
= 0.
\end{equation}
\end{theorem}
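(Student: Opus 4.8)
The plan is to compare the actual latent recursion with an \emph{idealized} process that is driven by the target embedding $\psi(P_{\text{target}})$ throughout, and then to combine two limits via the triangle inequality for total variation. Concretely, introduce the reference chain $\{z_t^\star\}$ defined by $z_t^\star = \text{DM}^{(t)}(z_{t-1}^\star, \psi(P_{\text{target}})) + \epsilon_t$ using the \emph{same} noise realizations $\epsilon_t$ as the real chain (a synchronous coupling), and let $p_t^\star$ denote its marginal law. I will establish (i) $\|p_t^\star - p_{\text{target}}\|_{\text{TV}} \to 0$, i.e.\ the idealized process relaxes to the target, and (ii) $\|p(z_T) - p_T^\star\|_{\text{TV}} \to 0$, i.e.\ the drift of the prompt sequence becomes negligible; the theorem then follows by adding the two bounds.

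For step (i) I would invoke Assumption~\ref{assumption:diffusion_stability}: the map $z \mapsto \text{DM}^{(t)}(z,\psi(P_{\text{target}}))$ is a contraction with modulus $L<1$ uniformly in $t$. Under the synchronous coupling this gives $\mathbb{E}\,\|z_t^\star - \tilde z_t^\star\| \le L\,\mathbb{E}\,\|z_{t-1}^\star - \tilde z_{t-1}^\star\|$ for any two copies started from different initializations, so the chain forgets its starting point geometrically; together with Assumption~\ref{assumption:noise_decay} on the (summable) decay of $\sigma_t$ the marginals form a Cauchy sequence in the Wasserstein-$1$ metric, and its limit is exactly $p_{\text{target}}$ (the point mass at the fixed point $z^\star = \text{DM}(z^\star,\psi(P_{\text{target}}))$ in the vanishing-noise regime, or the unique invariant law of the associated homogeneous chain otherwise). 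The standard ``contraction plus vanishing perturbation'' recursion $W_1(p_t^\star, p_{\text{target}}) \le L\,W_1(p_{t-1}^\star, p_{\text{target}}) + c\,\sigma_t$ then yields $W_1(p_t^\star, p_{\text{target}}) \to 0$.

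For step (ii) I would again use the synchronous coupling, together with the Lipschitz dependence of $\text{DM}^{(t)}$ on the embedding (part of Assumption~\ref{assumption:diffusion_stability}) and Assumption~\ref{assumption:prompt_convergence}, which gives $\delta_t := \|\psi(P_t) - \psi(P_{\text{target}})\| \to 0$. Setting $e_t := \mathbb{E}\,\|z_t - z_t^\star\|$ and subtracting the two recursions, the noise terms cancel and I obtain $e_t \le L\,e_{t-1} + L_\psi\,\delta_t$, hence $e_T \le L^T e_0 + L_\psi \sum_{k=1}^{T} L^{T-k}\delta_k \to 0$ by a Toeplitz/Cesàro argument since $\delta_k\to 0$ and $L<1$. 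Thus $W_1(p(z_T), p_T^\star) \le e_T \to 0$.

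Finally I must upgrade the two Wasserstein statements to total variation, and this is the step I expect to be the main obstacle: total variation is not contracted by Lipschitz maps and is not controlled by $W_1$ in general, so I would exploit the Gaussian smoothing built into the recursion — the law of $z_T$ is the convolution of an intermediate law with $\mathcal{N}(0,\sigma_T^2 I)$ (and, more robustly, with the Gaussian obtained by pushing a whole tail of steps through the contraction), for which one has an estimate of the form $\|\mu * \mathcal{N}(0,\sigma^2 I) - \nu * \mathcal{N}(0,\sigma^2 I)\|_{\text{TV}} \le C\,\sigma^{-1} W_1(\mu,\nu)$. The delicate point is that Assumption~\ref{assumption:noise_decay} shrinks $\sigma_T$, so I need the coupling error to vanish \emph{faster} than $\sigma_T$; this is precisely where the quantitative rates in Assumptions~\ref{assumption:prompt_convergence}–\ref{assumption:noise_decay} must be mutually compatible, and verifying that the residuals $L^T e_0$, $\sum_k L^{T-k}\delta_k$ and the relaxation term $c\,\sigma_T$ are all $o(\sigma_T)$ (equivalently, invoking the assumptions in the form that already guarantees $W_1(p(z_T),p_{\text{target}})/\sigma_T \to 0$) is the crux of the argument. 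An alternative that sidesteps the $W_1$-to-TV conversion is to work directly with the explicit one-step Gaussian transition densities and telescope a Dobrushin-coefficient bound, but the same rate-matching between prompt drift, contraction, and noise decay reappears there.
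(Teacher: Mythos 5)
Your route is genuinely different from the paper's. The paper argues at the level of densities: it asserts pointwise almost-everywhere convergence of $p_T$ to the target, identifies the law of $z_T$ as a Gaussian $\mathcal{N}(\mu_T,\sigma_T^2 I)$ with $\mu_T \to z_{\text{target}}$ and $\sigma_T^2 \to 0$, concludes that $p_T$ tends to the Dirac mass $\delta(z - z_{\text{target}})$, and then invokes Scheff\'e's lemma to convert this into total-variation convergence. You instead run a synchronous coupling against an idealized chain driven by $\psi(P_{\text{target}})$, derive the recursion $e_T \le L\, e_{T-1} + L_\psi\, \delta_T$ from Assumptions~\ref{assumption:prompt_convergence} and~\ref{assumption:diffusion_stability}, and control everything in Wasserstein distance before attempting the TV upgrade. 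Your version is more quantitative and makes explicit the error-propagation step that the paper leaves implicit (its pointwise-convergence premise is justified only by the remark that ``the contraction properties and the vanishing noise guarantee'' it).

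The obstacle you flag at the end --- that $W_1$ does not control TV and that the smoothing bound $\|\mu * \mathcal{N}(0,\sigma^2 I) - \nu * \mathcal{N}(0,\sigma^2 I)\|_{\text{TV}} \lesssim \sigma^{-1} W_1(\mu,\nu)$ degenerates as $\sigma_T \to 0$ --- is not a defect of your approach relative to the paper's; it is the real gap, and the paper's proof does not close it either. If the limit object is the point mass at $z_{\text{target}}$ (as the paper's own computation $\mu_T \to z_{\text{target}}$, $\sigma_T^2 \to 0$ forces), then every $p_T$ is absolutely continuous while the limit is singular, so $\|p_T - \delta_{z_{\text{target}}}\|_{\text{TV}} = 1$ for every $T$: the claimed TV convergence is false as literally stated, and Scheff\'e's lemma does not apply because the limit is not an integrable density. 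The conclusions actually available under the stated assumptions are weak convergence or $W_1$ convergence (which your coupling sketch essentially delivers), or genuine TV convergence only under the additional hypothesis that $p_{\text{target}}$ is a nondegenerate density and the noise does not vanish. So your rate-matching worry is exactly the right one; the fix is to weaken the metric in the conclusion or strengthen the hypotheses, not to push the $W_1$-to-TV conversion through a vanishing smoothing parameter.
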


In this part, we connect the multi-round diffusion framework with our preference-guided approach. By incorporating user feedback at each round, the prompt embedding function \(\psi(\cdot)\) is updated in a manner that reduces misalignment between user intentions and generated images. The iterative noise injection and denoising steps further allow the model to explore a variety of latent states, guided by user corrections. A more detailed mathematical background, including intermediate lemmas and additional proofs, is provided in Supplementary Material~\ref{appendix:2}.

\subsection{Reward-Based Optimization in Multi-round Diffusion Process}
\label{sec:reward_based_optimization}
To guide the multi-round diffusion process more effectively, we reformulate the existing loss constraints into reward functions, allowing for a preference-driven optimization that balances diversity, consistency and mutual information.

\noindent\textbf{Diversity Reward.}
During early rounds, we stimulate diverse outputs by maximizing the diversity reward:
\begin{equation}
R_{\text{div}} = \frac{1}{N(N-1)} \sum_{i \neq j} \Bigl(1 - \frac{f(z_i) \cdot f(z_j)}{\|f(z_i)\| \|f(z_j)\|}\Bigr),
\label{eq:diversity_reward}
\end{equation}
where \( z_i \) and \( z_j \) are distinct samples from multiple rounds of prompt-text pair data in a single dialogue of the training set, and \( f(z) \) represents latent features extracted from the final layer of the UNet.

\noindent\textbf{Consistency Reward.} As dialogue rounds progress in training, we introduce a consistency reward to ensure the model's outputs between different rounds maintain high consistency. This is achieved by maximizing the cosine similarity between consecutive dialogue outputs:
\begin{equation}
R_{\text{cons}} = \sum_{t=1}^{T-1} \frac{f(z_t) \cdot f(z_{t+1})}{\|f(z_t)\| \|f(z_{t+1})\|},
\end{equation}
where \( R_{\text{cons}} \) rewards alignment and stability across multiple dialogue rounds by minimizing discrepancies between consecutive frames.

\noindent\textbf{Mutual Information Reward.} The Mutual Information Reward is computed using a custom-trained reward model derived from Qwen-VL~\cite{bai2023qwenvlversatilevisionlanguagemodel} (with the final linear layer of the Qwen-VL model removed, it calculates the logits' mean as the reward and fine-tunes using QLoRA). The model is trained using a prompt paired with two contrasting images, each labeled with 0 or 1 to indicate poor or good alignment with human intent, and optimized through DPO.
\begin{equation}
R_{\text{MI}} = I(X; Y)
\label{eq:mi_reward}
\end{equation}
The mutual information reward \( I(X; Y) \) (In our later paper, we refer to this metric as the "preference score.") is optimized to fine-tune the model's outputs, aligning the generated image with user preferences.

\noindent\textbf{Total Reward.}
The overall reward is a weighted combination of several components:
\begin{equation}
R_{\text{total}} = \lambda_{\text{div}}(t) R_{\text{div}} + \lambda_{\text{cons}}(t) R_{\text{cons}} + \lambda_{\text{MI}}(t) R_{\text{MI}}.
\label{eq : reward}
\end{equation}
Here, \( t \) is the dialogue round index. Initially, the model emphasizes image diversity with \(\lambda_{\text{div}}(t) = \exp(-\alpha t)\), which starts close to 1. The weights for consistency \(\lambda_{\text{cons}}(t) = 1 - \exp(-\beta t)\) and mutual information \(\lambda_{\text{MI}}(t) = \tfrac{1}{2}\exp(-\gamma t)\) increase over time, shifting attention from diversity to consistency and mutual information to better reflect user intent across successive rounds. In our experiments, \(\alpha = 0.15\), \(\beta = 0.1\), and \(\gamma = 0.075\). Figure~\ref{fig:weight_changes} shows how these weights vary.
\begin{figure}[!ht]
    \centering
    \includegraphics[width=\linewidth]{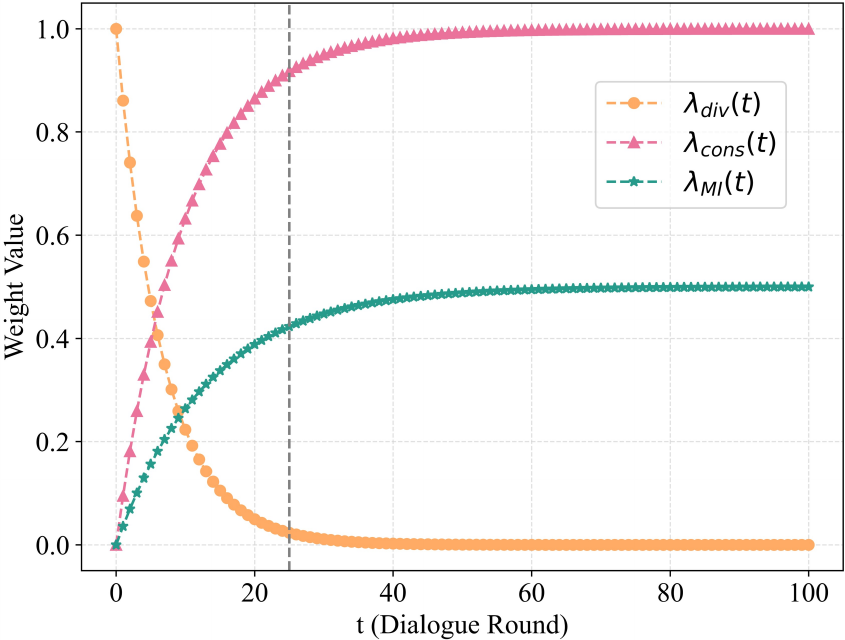}
    \caption{Weight changes for the different reward components.}
    \label{fig:weight_changes}

\Description{}\end{figure}

\subsection{Preference-Guided U-Net Optimization}
\label{sec:preference_learning}

As illustrated in Algorithm~\ref{alg:algorithm}, which outlines the preference learning process, we enhance the U-Net's image generation capabilities by embedding preference learning directly into its attention mechanisms, allowing for dynamic adjustment of the query, key, value, and output layer parameters based on rewards during the multi-round diffusion training process. Utilizing the LoRA framework, we first performed a low-rank decomposition of the weight matrix \(W\), resulting in an updated weight matrix defined as:
\begin{equation}
W_{\text{new}} = W_{\text{pretrained}} + \Delta W
\end{equation}
where:
\begin{equation}
\Delta W = B \times A + \eta \cdot \nabla_W R
\end{equation}
Matrices \(B \in \mathbb{R}^{d \times r}\) and \(A \in \mathbb{R}^{r \times d}\) correspond to the low-rank decomposition, \(\eta\) is the learning rate, and \(\nabla_W R\) is the gradient of the reward function guided by user preferences. During the multi-round diffusion training, the weight matrix is updated at each time step \(t\) as:
\begin{equation}
W_t = W_{t-1} + \eta \cdot \nabla_W R(W_{t-1}),
\end{equation} 
To ensure the generated images conform to user-defined preferences, the model minimizes the binary cross-entropy loss \(L_{\text{BCE}}\):
\begin{equation}
L_{\text{BCE}} = \bigl\|z_{t-1} - \text{DM}\bigl(z_t, \psi(P), t, W_{\text{new}}\bigr)\bigr\|,
\end{equation}
where \( z_{t-1} \) is the latent representation of the target image for the subsequent round, and \(\text{DM}(\cdot)\) indicates the model’s output at time step \( t \) based on \( z_t \), the prompt embedding \(\psi(P)\), the current time \( t \), and the updated weight matrix \( W_{\text{new}} \).

Below is a bridging paragraph that connects the above method with the theoretical framework, followed by a concise explanation demonstrating why the method is effective:

\noindent\textbf{Theoretical Analysis.} Theorem~\ref{theorems:global_optimality} guarantees that the solution sequence \(\{z_t^*\}\) converges to the Pareto optimal set, indicating that the trade-off among multiple objectives is systematically balanced as \(t\) grows. In other words, the dynamic weighting ensures that no single objective (diversity, consistency, or intent alignment) dominates the optimization in the long run, while the reward components collectively guide the model toward stable and desirable outcomes. This theoretical insight justifies the effectiveness of our approach and underlines its reliability for multi-turn diffusion tasks.

\begin{theorem}[Global Optimality of Dynamic Reward Optimization]
\label{theorems:global_optimality}
Under Assumption~\ref{assumption:dynamic_weights}, the solution sequence 
\(\{z_t^*\}_{t=1}^T\) of the optimization problem
\begin{equation}
\max_{z_t} R_{\text{total}}(t)
\end{equation}
converges to the Pareto optimal set as \(T \to \infty\), thereby achieving a balance among diversity, consistency, and intent alignment.
\end{theorem}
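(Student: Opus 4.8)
The plan is to prove Theorem~\ref{theorems:global_optimality} in three stages: a static weighted-sum scalarization argument that identifies each $z_t^*$ as a Pareto point, a compactness/continuity set-up, and a limiting argument as the weight schedule stabilizes. First I would invoke the classical scalarization lemma for multi-objective optimization. Fix a round $t$. By the weight schedules defined in Section~\ref{sec:reward_based_optimization}, the coefficients $\lambda_{\text{div}}(t)=\exp(-\alpha t)$, $\lambda_{\text{cons}}(t)=1-\exp(-\beta t)$, and $\lambda_{\text{MI}}(t)=\tfrac12\exp(-\gamma t)$ are all strictly positive for every finite $t\ge 1$, and Assumption~\ref{assumption:dynamic_weights} guarantees they remain positive with controlled (ordered) decay rates. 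Consequently, if $z_t^*$ maximizes $R_{\text{total}}(t)$, then $z_t^*$ is Pareto optimal for the vector objective $\mathbf{R}(z)=(R_{\text{div}}(z),R_{\text{cons}}(z),R_{\text{MI}}(z))$: were there a feasible $z'$ with $R_k(z')\ge R_k(z_t^*)$ for all $k\in\{\text{div},\text{cons},\text{MI}\}$ and strict for some $k$, then positivity of the weights would force $R_{\text{total}}(t)(z')>R_{\text{total}}(t)(z_t^*)$, contradicting optimality. Equivalently, at an interior stationary point the condition $\sum_k \lambda_k(t)\,\nabla_z R_k(z_t^*)=0$ is precisely the Pareto-stationarity (KKT) condition with nonnegative multipliers, so the three objectives are genuinely traded off at each round and none is ignored.

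Second, I would pass to the limit. Since the reward components are bounded (the cosine-similarity terms lie in $[-1,1]$ and the learned preference score is bounded), one may work in the bounded objective-value space, where the image of $\{z_t^*\}$ under $\mathbf{R}$ admits convergent subsequences by Bolzano--Weierstrass; if one additionally assumes the effective latent region is compact and the $R_k$ continuous (both implicit in the latent-space formulation and in Assumption~\ref{assumption:dynamic_weights}), the Pareto set $\mathcal{P}$ is nonempty and closed and $\{z_t^*\}$ itself has limit points. Normalizing the weight vector $\boldsymbol\lambda(t)=(\lambda_{\text{div}}(t),\lambda_{\text{cons}}(t),\lambda_{\text{MI}}(t))$ onto the simplex, Assumption~\ref{assumption:dynamic_weights} gives convergence of the normalized weights to a limit $\boldsymbol\lambda^\infty$. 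By Berge's maximum theorem the argmax correspondence of the scalarized objective is upper semicontinuous in the weight vector, so any limit point $z^\infty$ of $\{z_t^*\}$ maximizes the limiting scalarization $\langle\boldsymbol\lambda^\infty,\mathbf{R}(\cdot)\rangle$ and hence lies in the weak Pareto set; together with closedness of $\mathcal{P}$ this yields $\operatorname{dist}(z_t^*,\mathcal{P})\to 0$, which is the claimed convergence, and the per-round trade-off established in stage one gives the "balance among diversity, consistency, and intent alignment."

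The main obstacle will be the degeneracy of the weight schedule in the limit: because $\lambda_{\text{div}}(t),\lambda_{\text{MI}}(t)\to 0$ while $\lambda_{\text{cons}}(t)\to 1$, the naive limiting scalarization collapses onto the consistency objective alone, which only certifies \emph{weak} Pareto optimality and would not, by itself, exclude a limit that sacrifices diversity or intent alignment. To close this gap I would not take the weights to zero directly but instead track the renormalized ratios $\lambda_{\text{div}}(t)/\lambda_{\text{MI}}(t)=\tfrac12 e^{-(\alpha-\gamma)t}$ and $\lambda_{\text{MI}}(t)/\lambda_{\text{cons}}(t)$, and use a perturbation estimate showing that the exponentially small yet nonzero weights confine each $z_t^*$ to an $O(e^{-ct})$ neighborhood of the properly Pareto-optimal front; this is exactly where Assumption~\ref{assumption:dynamic_weights}, ensuring the weights stay positive with ordered decay, does the substantive work, upgrading the limit from weakly Pareto to Pareto optimal. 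A secondary technical point is that $R_{\text{MI}}$ is realized by a learned neural reward model and is assumed only continuous, not smooth or concave; I would therefore phrase the stationarity remark in terms of Clarke subgradients, or simply bypass first-order conditions and rely solely on the zeroth-order scalarization-plus-compactness chain above, which needs no differentiability.
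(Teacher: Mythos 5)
Your stage-one argument --- strict positivity of the exponential weights at every finite $t$ forces each maximizer $z_t^*$ of the weighted sum to be Pareto optimal, and compactness of the latent domain plus continuity of the rewards gives existence --- is exactly the backbone of the paper's own proof, so the two agree up to that point. Where you genuinely diverge is the passage to the limit. The paper first classifies the curvature of the component rewards (concavity of $R_{\text{div}}$, convexity of $R_{\text{cons}}$, quasi-convexity of $R_{\text{MI}}$) and then analyzes the scalar value function $V(t)=R_{\text{total}}(t)$ directly, showing $dV/dt\to 0$ and asserting that $d^2V/dt^2$ is eventually negative so that $V(t)$ settles at a maximum on the frontier; it never confronts the fact that $\lambda_{\text{div}}(t)$ and $\lambda_{\text{MI}}(t)$ vanish while $\lambda_{\text{cons}}(t)\to 1$. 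You instead normalize the weight vector onto the simplex, invoke Berge's maximum theorem for upper semicontinuity of the argmax correspondence, and --- crucially --- flag that the limiting scalarization collapses onto the consistency objective and therefore only certifies \emph{weak} Pareto optimality, proposing to repair this via renormalized weight ratios and an exponential perturbation bound. This is a sharper diagnosis than the paper's: with $\gamma<\beta<\alpha$ the slowest-decaying term of $d^2V/dt^2$ is $+\tfrac{\gamma^2}{2}e^{-\gamma t}R_{\text{MI}}\ge 0$, so the paper's negativity claim is itself fragile, and your route does not rely on it. What each approach buys: the paper gets a short calculus argument at the price of glossing over the degenerate limit; you get a more defensible convergence statement at the price of extra explicit hypotheses (compact effective latent region, continuity of the learned $R_{\text{MI}}$) and one step that is only sketched --- the $O(e^{-ct})$ perturbation estimate upgrading weak to proper Pareto optimality. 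That estimate is the load-bearing piece of your stage three; until it is written out, your argument lands in the same weak-Pareto position you correctly identify as insufficient.
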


Building on the Preference-Guided U-Net Optimization, we adopt the dynamically weighted total reward function \(R_{\text{total}}\) to ensure that each generated image simultaneously promotes diversity, maintains temporal coherence, and aligns with user-defined semantics. The combination of LoRA-based parameter updates and careful reward design enables the model to adapt its attention mechanisms in real time, thereby strengthening the consistency of features across multiple dialogue rounds. A detailed mathematical background and the full theoretical derivations are provided in Supplementary material~\ref{appendix:2}.

\begin{algorithm}[tb]
\caption{Multi-Round Diffusion with Feedback and LoRA Fine-Tuning}
\label{alg:algorithm}
\textbf{Dataset}: Input set $\mathcal{X} = \{X_1, X_2, \dots, X_n\}$ \\
\textbf{Pre-training Dataset}: Text-image pairs dataset $\mathcal{D} = \{(\text{txt}_1, \text{img}_1), \dots, (\text{txt}_n, \text{img}_n)\}$ \\
\textbf{Input}: Initial input $X_0$, Initial noise $z_T$, LoRA parameters $W_0$, Reward model $r$, dynamic weights $\lambda_{\text{div}}$, $\lambda_{\text{cons}}$, $\lambda_{\text{MI}}$ \\
\textbf{Initialization}: Number of noise scheduler steps $T$, time step range for fine-tuning $[T_1, T_2]$ \\
\textbf{Output}: Final generated image $\hat{y}$

\begin{algorithmic}[1]
    \FOR{each $X_i \in \mathcal{X}$ and $(\text{txt}_i, \text{img}_i) \in \mathcal{D}$}
        \STATE $t \leftarrow \text{rand}(T_1, T_2)$ // Pick a random time step 
        \STATE $z_T \sim \mathcal{N}(0, I)$ // Sample noise as latent
        \FOR{$\tau = T$ to $t$}
            \STATE No gradient: $z_{\tau-1} \leftarrow \text{DM}_{W_i}^{(\tau)}\{z_\tau, \psi_t(\text{txt}_i)\}$ 
        \ENDFOR
        \STATE With gradient: $z_{t-1} \leftarrow \text{DM}_{W_i}^{(t)}\{z_t, \psi_t(\text{txt}_i)\}$
        \STATE $x_0 \leftarrow z_{t-1}$ // Predict the original latent 
        \STATE $z_i \leftarrow x_0$ // From latent to image
        
        \STATE \textbf{Reward Calculation and PPO Update:}
        \STATE Calculate diversity reward $R_{\text{div}}$, consistency reward $R_{\text{cons}}$, and mutual information reward $R_{\text{MI}}$
        \STATE Combine them to form the total reward $R_{\text{total}}$
        \STATE $L_{\text{reward}} \leftarrow \lambda R_{\text{total}}$
        \STATE Update LoRA parameters using PPO: $W_{i+1} \leftarrow W_i + \nabla_W L_{\text{reward}}$ (fixing $\phi$)
        
        \STATE \textbf{Noise Loss Calculation:}
        \STATE Calculate noise prediction loss $L_{\text{noise}} \leftarrow \| x_0 - \text{img}_i \|^2$
        \STATE Update LDM weights: $\phi_{i+1} \leftarrow \phi_{i} - \nabla_\phi L_{\text{noise}}$ (injecting LoRA parameters)
    \ENDFOR
    \STATE \textbf{return} Final updated weights $\phi_n$
\end{algorithmic}
\end{algorithm}

\vspace{-1em}
\section{Experiments}
\subsection{Experiment Settings}
We fine-tuned the Stable Diffusion v2.1 model for multi-round dialogues using LoRA with a rank of 4 and an \(\alpha\) value of 4, where \(\alpha\) is the scaling factor on parameters injected into the attention layers. The training was carried out in half precision on 4 NVIDIA A100 GPUs (40GB each), using a total batch size of 64 and a learning rate of 3e-4. For the diffusion process (Figure~\ref{fig:box_plot}), we set \(T = 70\) and \([T_1, T_2] = [1, 40]\).
\begin{figure}[!ht]
    \centering
    \includegraphics[width=\linewidth]{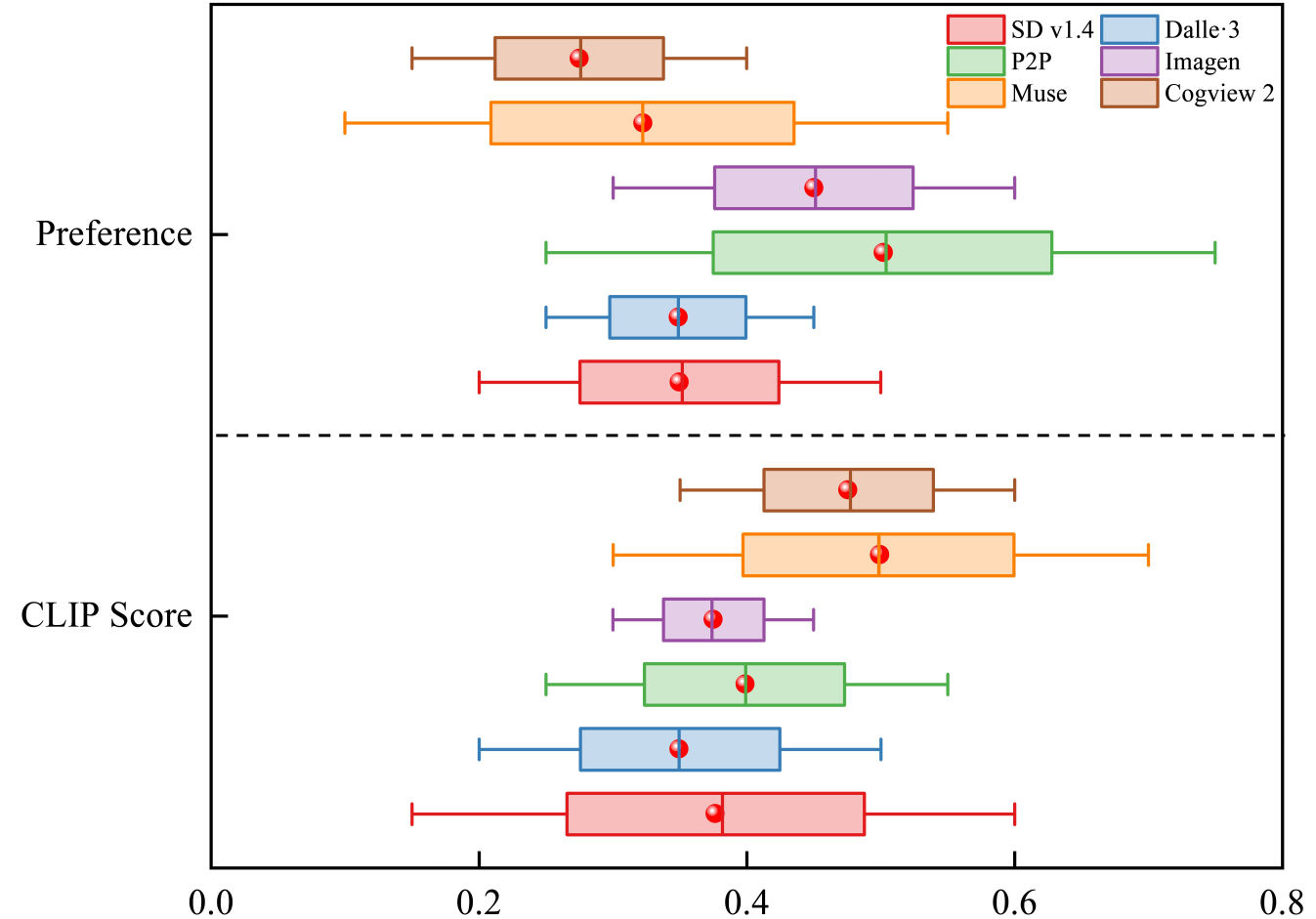} 
    \caption{Comparison of Preference and CLIP scores across different models.}
    \label{fig:box_plot}
\Description{}\end{figure} 

For the reward model, we integrated QLoRA (Quantized Low-Rank Adapter) into the transformer layers of Qwen-VL~\cite{bai2023qwenvlversatilevisionlanguagemodel}, specifically targeting its attention mechanisms and feedforward layers. QLoRA was configured with a rank of 64 and \(\alpha = 16\) for computational efficiency. The policy model was trained for one epoch with a batch size of 128 on 8 NVIDIA A100 GPUs (80GB each), using a cosine schedule to gradually reduce the learning rate.

\begin{figure}[!ht]
    \centering
    \includegraphics[width=\linewidth]{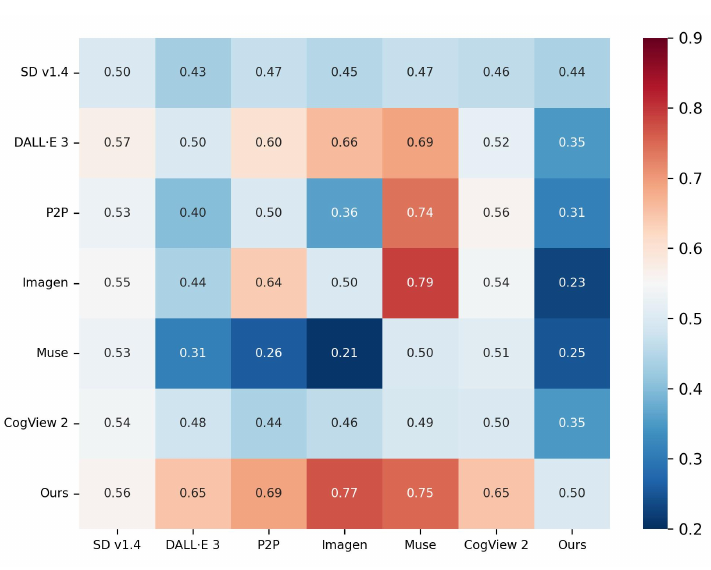} 
    \caption{Win rates between all methods.}
    \label{fig:win_rates_heatmap}
\Description{}\end{figure}
\begin{figure*}[!ht]
    \centering
    \includegraphics[width=\textwidth]{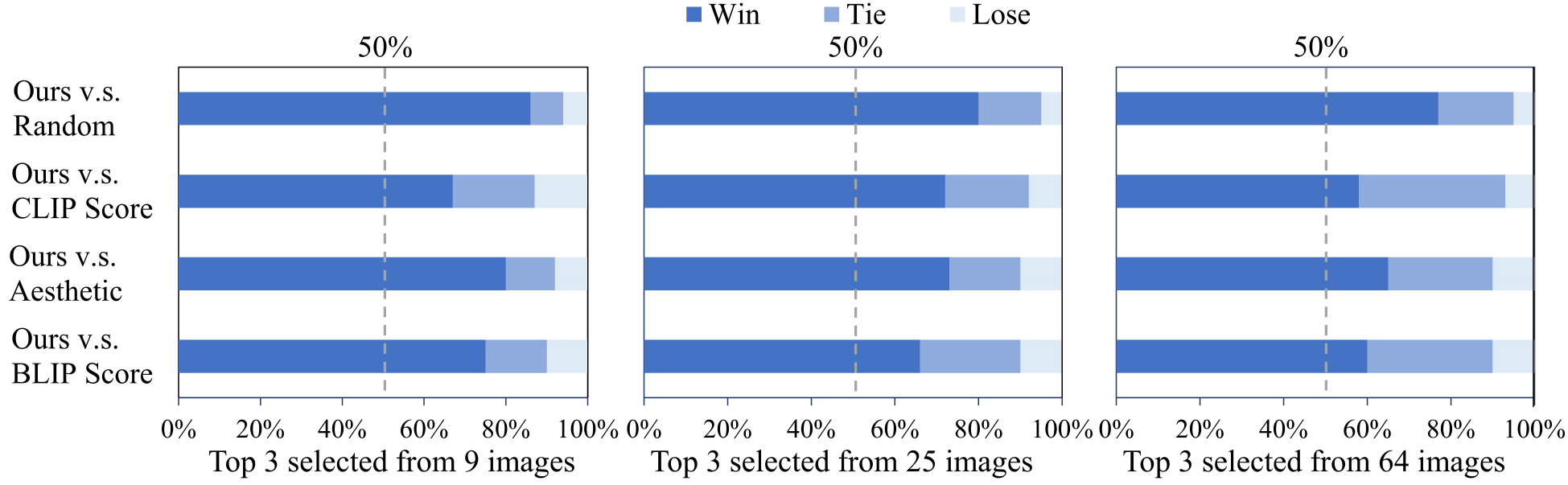}
    \caption{Win, tie, and lose rates of our model compared to Random, CLIP Score, Aesthetic, and BLIP Score across different image selection scenarios (9, 25, and 64 images).}
    \label{fig:win_rates_comparison}
\Description{}\end{figure*}
\begin{figure}[!ht]
    \centering
    \includegraphics[width=\linewidth]{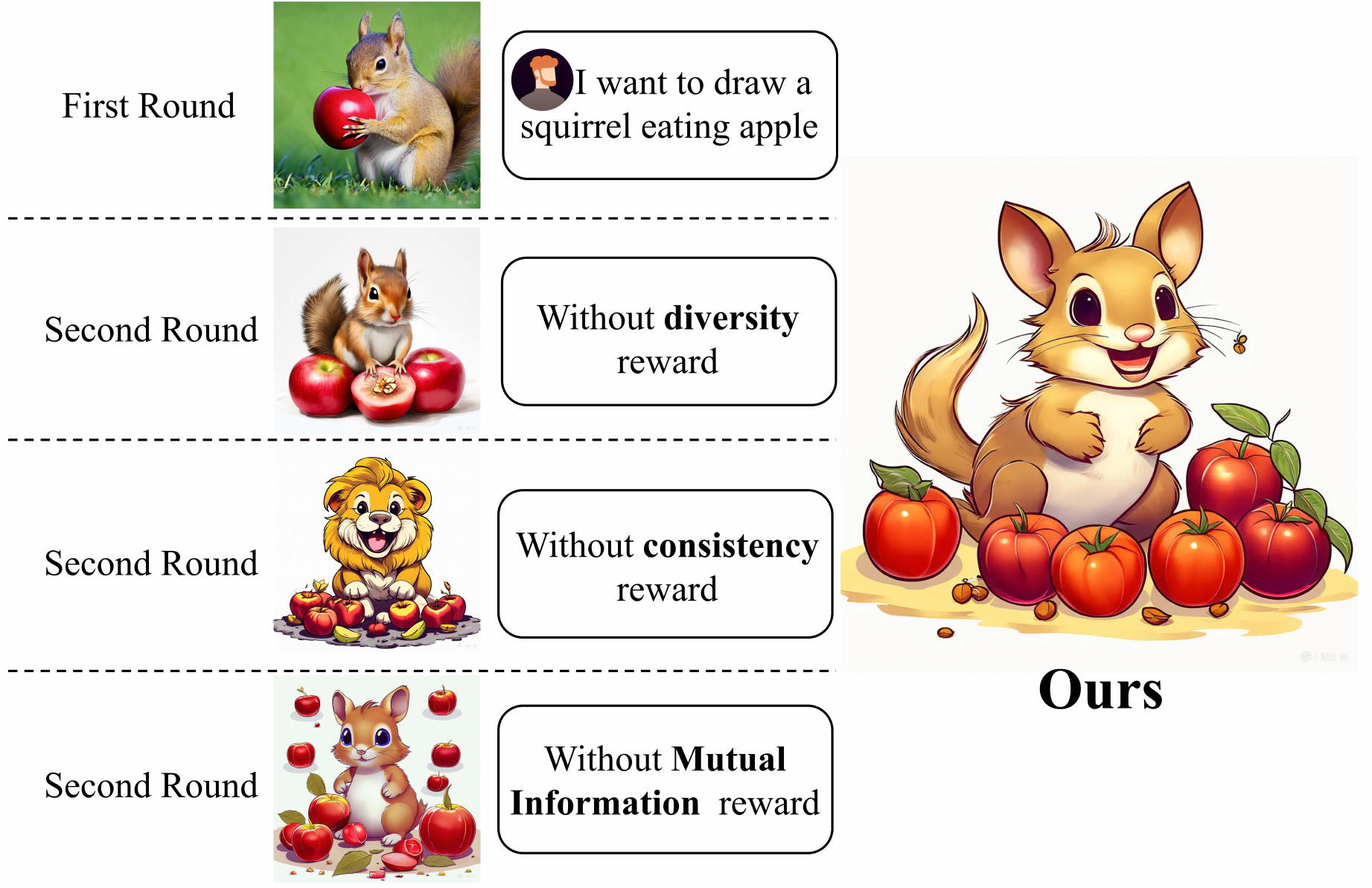} 
    \caption{Impact of removing consistency, user alignment, and diversity rewards on images generated from the prompt: "Modify this figure to a cartoon style with multiple apples and a happy squirrel."}
    \label{fig:ablation_study}
\Description{}\end{figure}
\begin{figure}[!ht]
    \centering
    \includegraphics[width=\linewidth]{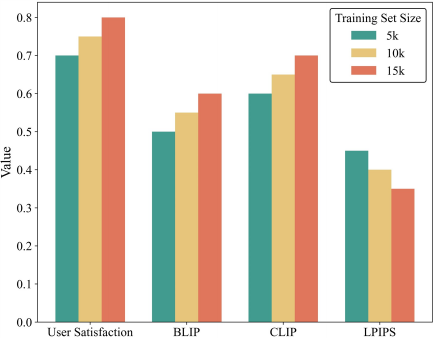}
    \caption{Comparison of user satisfaction, BLIP, CLIP, and LPIPS for different training set sizes (5k, 10k, 15k). Larger sets improve scores while lowering LPIPS.}
    \label{fig:training_set_comparison}
\Description{}\end{figure}
\subsection{Dataset}
\begin{figure}[!ht]
    \centering
    \includegraphics[width=\linewidth]{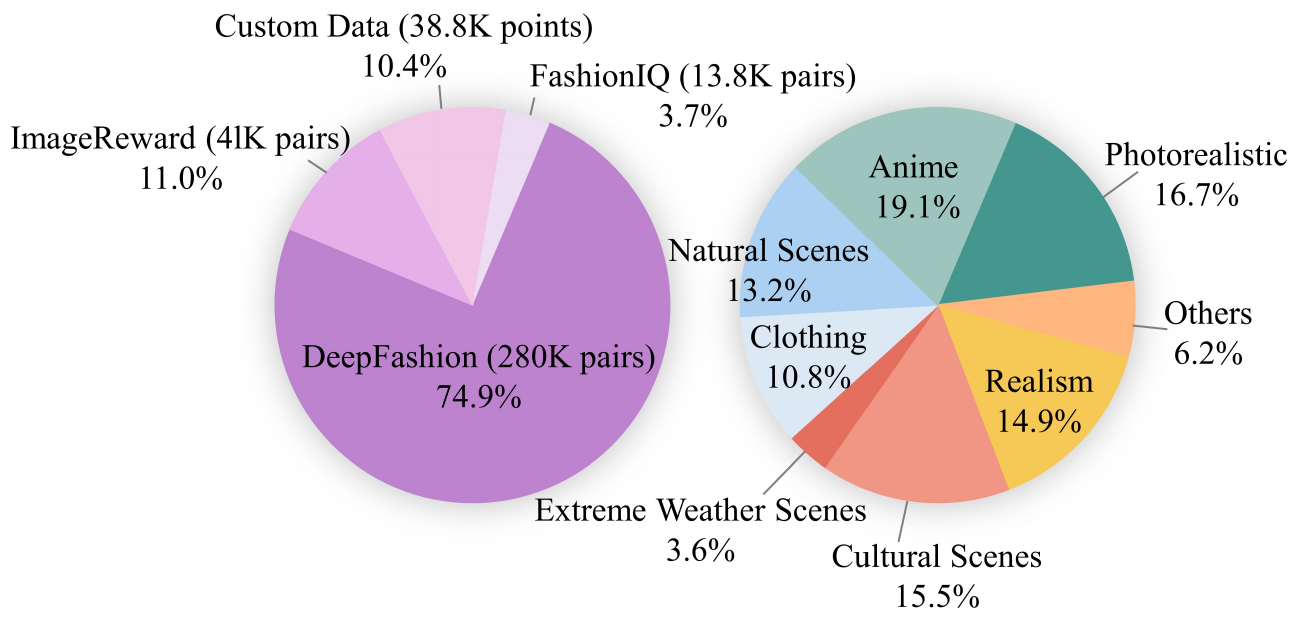} 
    \caption{Distribution of selected datasets and visual styles.}
    \label{fig:dataset_distribution}
\Description{}\end{figure} 
We selected 30\% of the ImageReward dataset, 35\% of DeepFashion~\cite{liu2016deepfashion}, and 18\% of FashionIQ~\cite{wu2020fashioniqnewdataset}, resulting in over 41K multi-round dialogue ImageReward pairs, 280K DeepFashion pairs, and 13.8K FashionIQ pairs. Additionally, we collected 38.8K custom data points (generated with QwenAI~\cite{bai2023qwentechnicalreport}, ChatGPT-4~\cite{openai2024gpt4technicalreport}, and sourced from the internet) to enhance diversity, as shown in Figure \ref{fig:dataset_distribution}. These datasets were structured into prompt-image pairs with preference labels for reward model training and text-image pairs for diffusion model training (where related text-image pairs were divided into multiple multi-round dialogues), totaling 55,832 JSON files. To ensure data quality, we excluded prompts with excessive visual style keywords and unclear images. The final dataset was split 80\% for training and 20\% for testing, with theme distributions shown in Figure~\ref{fig:dataset_distribution}.
\begin{figure*}[!ht]
    \centering
    \includegraphics[width=\textwidth]{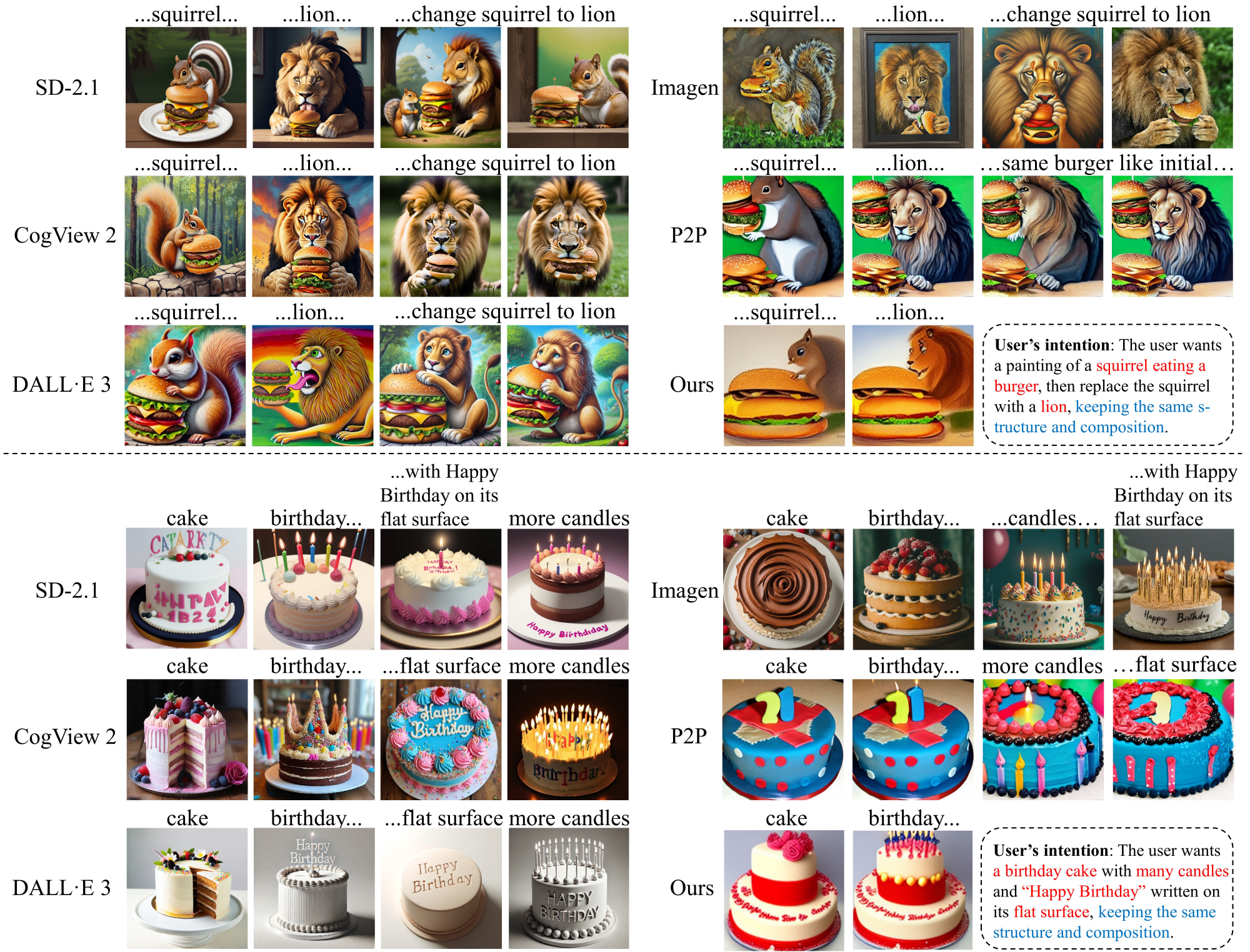}
    \caption{
The figure compares the performance of sd-2.1, Imagen, CogView2, DALL·E 3, P2P (Prompt-to-Prompt), and our model in modifying images based on user instructions.}
\label{fig:overall-compare}
\Description{}\end{figure*}
For the reward model's preference labeling, negative images were created based on the initial text-image pair (positive image). These were generated by selecting lower-scoring images in ImageReward, randomly mismatching images in FashionIQ and DeepFashion, and modifying text prompts into opposite or random ones in our custom dataset, then using the LLM to generate unrelated images. Preference labels were set to 0 for negative images and 1 for positive images. Unlike multi-turn dialogue JSON files, these were stored in a single large JSON file. We allocated 38\% of the dataset for reward model training and 17\% for diffusion model training to prevent overfitting and conserve computational resources, as the loss had already clearly converged.
\begin{table*}[!ht]
\centering
\caption{Comparison of Different Models on Real User Prompts and Testing Set. The aesthetic score is obtained from the Everypixel Aesthetic Score API, which is trained on a large user-generated dataset~\cite{wang2019youtube}, rating images from 0 to 5 based on visual appeal. For the testing set, only dialogue prompts were retained, with images omitted.}
\label{tab:model_comparison}
\small
\resizebox{\textwidth}{!}{
\begin{tabular}{cc|cccccccccc} 
\hline
\multirow{2}{*}{\textbf{Model}} & \multicolumn{1}{c|}{\textbf{Real User Prompts}} & \multicolumn{10}{c}{\textbf{Testing Set}} \\ 
\cline{2-12} 
& \textbf{Human Eval \(\uparrow\)} & \multicolumn{2}{c}{\textbf{Lpips~\cite{zhang2018unreasonable} \(\downarrow\)}} & \multicolumn{2}{c}{\textbf{Aesthetic Score}} & \multicolumn{2}{c}{\textbf{CLIP~\cite{radford2021learning} \(\uparrow\)}} & \multicolumn{2}{c}{\textbf{BLIP~\cite{li2022blip} \(\uparrow\)}} & \multicolumn{2}{c}{\textbf{Round \(\downarrow\)}} \\ 
\cline{2-12} 
& Rank (Win) & Rank & Score & Rank & Score & Rank & Score & Rank & Score & Rank & Score \\ 
\hline
SD V-1.4~\cite{rombach2022highresolutionimagesynthesislatent}                     & 8 (190) & 5 & 0.43 & 7 & 2.1 & 7 & 1.3 & 6 & 0.17 & 5 & 8.4           \\ 
Dalle-3                      & 3 (463) & 8 & 0.65 & \textbf{1} & \textbf{3.5} & 4 & 2.7 & 7 & 0.13 & 8 & 13.7           \\ 
Prompt-to-Prompt~\cite{hertz2022prompt}             & 4 (390) & 2 & 0.23 & 6 & 2.4 & 2 & 3.7 & 2 & 0.54 & 2 & 5.6            \\ 
Imagen~\cite{saharia2022photorealistic}                       & 5 (362) & 3 & 0.37 & 5 & 2.9 & 6 & 2.5 & 4 & 0.22 & 3 & 7.1            \\ 
Muse~\cite{chang2023muse}                         & 6 (340) & 4 & 0.39 & 2 & 3.3 & 3 & 3.4 & 5 & 0.21 & 7 & 12.3           \\ 
CogView 2~\cite{ding2022cogview2fasterbettertexttoimage}                     & 7 (264) & 6 & 0.47 & 3 & 3.2 & 5 & 2.6 & 7 & 0.13 & 4 & 7.2            \\ 
\hline
\textbf{Ours}                & \textbf{1 (508)} &  \textbf{1} &  \textbf{0.15} & 4 & 3.1 & \textbf{1} & \textbf{4.3} & \textbf{1} & \textbf{0.59} & \textbf{1} & \textbf{3.4}   \\ 
\textbf{Ours (Reward Coefficient: all 0.25)} & 2 (477) & 7 & 0.56 & 2 & 3.3 & 4 & 3.2 & 3 & 0.49 & 6 & 8.9   \\ 
\hline
\end{tabular}
}

\begin{minipage}{\textwidth}
\end{minipage}
\end{table*}
\begin{table*}[!ht]
\centering
\caption{Evaluation of Modality Switching for different models, with T and I representing text and image, respectively. For example, T$\rightarrow$I indicates a text-to-image request. The best result is in bold.}
\label{table:modality_switching}
\begin{tabular}{@{}lcccccccccc@{}}
\toprule
Model & \multicolumn{3}{c}{Round1} & \multicolumn{4}{c}{Round2} & \multicolumn{3}{c}{Round3} \\ \cmidrule(r){2-4} \cmidrule(l){5-8} \cmidrule(l){9-11}
 & T$\rightarrow$T & T$\rightarrow$I & I+T$\rightarrow$I & T$\rightarrow$T & T$\rightarrow$I & I+T$\rightarrow$I & I+T$\rightarrow$T & T$\rightarrow$T & T$\rightarrow$I & I+T$\rightarrow$I \\ \midrule
Qwen-VL-0-shot~\cite{bai2023qwenvlversatilevisionlanguagemodel} & 83.4 & 5.6  & 0.8  & 93.1  & 64.5  & 30.1  & 2.4  & 91.5  & 68.1  & 24.9  \\
Qwen-VL-1-shot~\cite{bai2023qwenvlversatilevisionlanguagemodel} & 85.8 & 6.1  & 0.7  & 94.2  & 67.9  & 31.0  & 2.3  & 92.4  & 70.4  & 24.3  \\
Multiroundthinking~\cite{zeng2024instilling} & 64.7  & 50.3  & 49.9  & 84.6  & 78.4  & 34.6  & 28.5  & 87.1  & 88.0  & 15.4  \\
Promptcharm~\cite{wang2024promptcharm} & 88.9  & 84.7  & 87.1  & 91.3  & 89.2  & 79.9  & 81.4  & 91.8  & \textbf{93.4}  & 71.3  \\
DialogGen\cite{huang2024dialoggen} & 77.4  & \textbf{90.4}  & 93.1  & 89.7  & 84.3  & 93.2  & 92.6  & 87.4  & 88.3  & 95.7  \\
Ours & \textbf{89.9} & 88.7  & \textbf{96.1}  & \textbf{91.6}  & \textbf{92.1}  & \textbf{95.1}  & \textbf{94.7}  & \textbf{90.9}  & 89.9  & \textbf{94.3} \\ \bottomrule
\end{tabular}
\end{table*}

\subsection{Comparison Study}
\noindent\textbf{CLIP Score vs. Our Preference Score Performance.}
Figure~\ref{fig:box_plot} illustrates a comparison of CLIP scores~\cite{hessel2022clipscorereferencefreeevaluationmetric} and user preference scores across several generative models. Our preference scores show greater distinguishability and alignment with human judgments, thanks to a wider interquartile range, while CLIP scores tend to miss these nuanced differences, as indicated by their narrower medians.\\
\noindent\textbf{Performance Comparison of Various Models.}
Table~\ref{tab:model_comparison} shows that, based on feedback from 2706 users(A blind cross-over design ensured users experienced all model variations randomly), our model achieved higher \textbf{user satisfaction} (overall assessment after testing based on response time, aesthetic score, intent reflection, and round number, with an average score of 0-5 for each sub-item) and outperformed others in visual consistency, dialogue efficiency, and intent alignment, while DALL-E 3, despite its superior aesthetics, ranked lower due to weaker intent alignment and consistency.\\
\textbf{Accuracy of Metrics Reflecting User Intentions.} Figure~\ref{fig:win_rates_comparison} compares our preference score with random selection (randomly selecting 3 images), CLIP score, aesthetic score, and BLIP~\cite{li2022blipbootstrappinglanguageimagepretraining} score (each selecting the top 3 images). The results demonstrate that our preference score consistently outperforms the others, with accuracy exceeding 50\% and remaining stable as the number of images increases, underscoring its superior ability to capture human intent.\\
\textbf{Preference Score Comparison Across Models After Few Dialogues.}
In this experiment (Figure~\ref{fig:win_rates_heatmap}), we compared the preference score win rates of various generative models (e.g., SD v1.4, DALL·E, P2P, Imagen) across 8 dialogue rounds. Our model outperformed others, with the highest win rates of 0.84 against CogView 2 and 0.78 against Muse, showcasing superior handling of complex dialogues and intent capture. Only data from the 8 rounds were included, as longer dialogues were beyond the scope of the analysis.

\noindent\textbf{Modality Switching Performance Across Models.} We further evaluate our model’s performance in Table~\ref{table:modality_switching}. It shows our model consistently excels in modality-switching tasks, particularly in text-to-image (T→I) and image-to-text (I+T→T) conversions. Based on~\cite{huang2024dialoggen}, our model demonstrates high accuracy across all rounds, outperforming competitors, especially in complex multi-modal scenarios.
\subsection{Ablation Study}
\noindent\textbf{Ablation of different rewards.} Figure~\ref{fig:ablation_study} shows that removing the diversity reward results in repetitive and less varied images, while removing the consistency reward leads to incoherent outputs that deviate from the intended style and composition. Excluding the mutual information reward causes the generated image to lose connection with the prompt, failing to accurately capture the desired theme.

\noindent\textbf{Ablation of Diffusion Model Training Size.} Figure~\ref{fig:training_set_comparison} shows that larger training sets improve user satisfaction, CLIP, and BLIP scores, though gains plateau between 10k and 15k. LPIPS~\cite{zhang2018unreasonable}  values decrease, indicating better image consistency and quality in multi-turn dialogues.

\noindent\textbf{Ablation Study on Reward Coefficient Impact.} The table~\ref{tab:model_comparison} compares our model's performance with various reward coefficients. A dynamic coefficient yields the best results in consistency, aesthetics, and CLIP/BLIP scores, while a fixed 0.25 coefficient reduces accuracy and increases variability, highlighting the superiority of a dynamic approach.

\subsection{Visualization Results}

Table~\ref{tab:model_comparison} compares our model’s performance with various reward coefficients. A dynamic coefficient yields the best results in consistency, aesthetics, and CLIP/BLIP scores, while a fixed 0.25 coefficient reduces accuracy and increases variability, illustrating the advantage of a dynamic approach.
\vspace{-1em}
\section{Conclusion}
\label{sec:conclusion}
This work addresses the challenges in user-centric text-to-image generation by integrating human feedback and advanced learning strategies. Our proposed Visual Co-Adaptation framework combines multi-turn dialogues, a tailored reward structure, and LoRA-based diffusion processes, resulting in images that better align with user intent. Experiments indicate that mutual information maximization captures preferences more effectively than standard reinforcement learning methods. In addition, our interactive tool lowers the barrier for non-experts, allowing broader participation in creative AI applications.

\bibliographystyle{ACM-Reference-Format}
\bibliography{main}
\setcounter{section}{0}
\setcounter{table}{0}
\setcounter{figure}{0}
\setcounter{equation}{0}
\clearpage
\appendix

\section{Theoretical Analysis}

\subsection{Conditional Convergence of Multi-Round Diffusion Process}\label{appendix:1}
\begin{assumption}[Prompt Convergence Condition]
\label{assumption:prompt_convergence}
For some \(\alpha \in (0,1)\), the prompt sequence satisfies
\begin{equation}
\bigl\|\psi(P_t) - \psi(P_{\text{target}})\bigr\|_2 
\leq \alpha^t 
         \bigl\|\psi(P_0) - \psi(P_{\text{target}})\bigr\|_2.
\end{equation}
\end{assumption}

\begin{assumption}[Diffusion Model Stability]
\label{assumption:diffusion_stability}
There exists \(\beta < 1\) such that for any \(z, z'\) and embedding \(\psi\),
\begin{equation}
\bigl\|\text{DM}^{(t)}(z,\psi) - \text{DM}^{(t)}(z',\psi)\bigr\|_2 
\leq 
\beta \bigl\|z - z'\bigr\|_2.
\end{equation}
\end{assumption}

\begin{assumption}[Noise Decay Condition]
\label{assumption:noise_decay}
The noise term \(\sigma_t\) satisfies 
\begin{equation}
\sigma_t = o\Bigl(\tfrac{1}{t}\Bigr).
\end{equation}
\end{assumption}

\begin{theorem}[Conditional Convergence of Multi-Round Diffusion Process. Theorem~\ref{thms:conditional_convergence}]
\label{thm:conditional_convergence}
Given a user feedback sequence \(\{\nabla_{\text{feedback}}^{(t)}\}_{t=1}^T\) that generates prompt sequences \(\{P_t\}_{t=1}^T\) via the language model \(\mathcal{F}_{\text{LLM}}\), assume there exists an ideal prompt \(P_{\text{target}}\) such that \(\psi(P_{\text{target}})\) perfectly aligns with user intent. Define the latent variable sequence \(\{z_t\}_{t=1}^T\) of the multi-round diffusion process recursively as:
\begin{equation}
z_{t-1} 
= \text{DM}^{(t)}\bigl(z_t, \psi(P_t)\bigr) 
  + \epsilon_t, 
\quad \epsilon_t 
\sim \mathcal{N}\bigl(0, \sigma_t^2 I\bigr),
\end{equation}
where \(\text{DM}^{(t)}\) is the diffusion model at round \(t\), and \(\psi(\cdot)\) is the prompt embedding function. Under Assumptions~\ref{assumption:prompt_convergence}, \ref{assumption:diffusion_stability}, and \ref{assumption:noise_decay}, as \(T \to \infty\), the generated distribution \(p(z_T)\) converges to the target distribution \(p_{\text{target}}(z)\) in total variation norm:
\begin{equation}
\lim_{T \to \infty}
\bigl\|p(z_T) - p_{\text{target}}(z)\bigr\|_{\text{TV}}
= 0.
\end{equation}
\end{theorem}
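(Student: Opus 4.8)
The plan is to compare the realized latent sequence with an \emph{ideal} reference sequence that is driven at every round by the target embedding $\psi(P_{\text{target}})$ through the same diffusion maps and the same Gaussian draws, and to show that the two --- and hence their laws --- coalesce. First, alongside $\{z_t\}$ I would set up the reference trajectory $\{z_t^{\star}\}$ obtained from the identical recursion with each $\psi(P_t)$ replaced by $\psi(P_{\text{target}})$ and the noise increments $\epsilon_t$ kept the same, so that the noise cancels upon subtraction. Writing $e_t$ for the expected distance between the two latents after $t$ rounds and splitting the one-step difference with the triangle inequality, Assumption~\ref{assumption:diffusion_stability} bounds the latent-argument part by $\beta e_{t-1}$, and --- using that $\text{DM}^{(t)}$ is Lipschitz in its embedding argument over the bounded range of the latents, with some constant $L$ --- Assumption~\ref{assumption:prompt_convergence} bounds the prompt-argument part by $L\alpha^{t}\|\psi(P_0)-\psi(P_{\text{target}})\|$. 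This gives the perturbed-contraction recursion $e_t \le \beta e_{t-1} + c\,\alpha^{t}$ with $\beta,\alpha\in(0,1)$.

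Second, I would unroll this to $e_t \le \beta^{t}e_0 + c\sum_{j=1}^{t}\beta^{\,t-j}\alpha^{\,j} = O\!\bigl(\max(\alpha,\beta)^{t}\bigr)$, so $z_t - z_t^{\star}\to 0$ in $L^1$ at a geometric rate. In parallel, the reference recursion is a genuine $\beta$-contraction driven by the fixed embedding $\psi(P_{\text{target}})$ whose Gaussian increments die out by Assumption~\ref{assumption:noise_decay}; a Banach / iterated-random-function argument then shows $\text{Law}(z_t^{\star})$ converges, and we take its limit to be $p_{\text{target}}(z)$. Composing the two yields $\text{Law}(z_t)\to p_{\text{target}}$ in the Wasserstein ($L^1$-transport) metric.

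Third, I would upgrade this to total variation using the Gaussian smoothing built into the recursion: the last round adds an isotropic $\epsilon_t\sim\mathcal{N}(0,\sigma_t^2 I)$ independent of everything before it, so $\text{Law}(z_t)$ and $\text{Law}(z_t^{\star})$ are Gaussian mixtures with densities; writing both as mixtures over a common coupling of their pre-noise means (whose expected separation equals $e_T$, the final noise being shared) and invoking $\|\mathcal{N}(u,\sigma^2 I)-\mathcal{N}(u',\sigma^2 I)\|_{\text{TV}}\le\|u-u'\|/(2\sigma)$ gives
\begin{equation}
\bigl\|p(z_T)-p_{\text{target}}\bigr\|_{\text{TV}}
\le \frac{e_T}{2\sigma_T} + \bigl\|\text{Law}(z_T^{\star})-p_{\text{target}}\bigr\|_{\text{TV}},
\end{equation}
the second term being handled by the ergodicity step.

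I expect this last step to be the main obstacle. The term $e_T/(2\sigma_T)$ pits the geometric decay of $e_T$ against the decay rate of $\sigma_T$: Assumption~\ref{assumption:noise_decay} ($\sigma_t=o(1/t)$) is what drives the perturbation/ergodicity side, but for the TV bound one additionally needs $\sigma_t$ not to vanish faster than $e_t$ (e.g.\ $\sigma_t$ of order $1/(t\log t)$ is compatible with $o(1/t)$ and suffices, since the geometric decay of $e_t$ then wins), or else one must replace the $1/\sigma$-type estimate by a Scheff\'e argument on the mixture densities with $p_{\text{target}}$ understood as the limiting law of the target-driven chain rather than a fixed density. Pinning down this interplay between noise schedule and contraction rates --- together with the bounded-range / joint-Lipschitz regularity of $\text{DM}^{(t)}$ needed to obtain the constant $L$ in the first step --- is where the care is required; the remainder is routine perturbed-fixed-point bookkeeping.
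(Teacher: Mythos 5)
Your route is genuinely different from the paper's. The paper argues at the level of densities: it asserts that the contraction and vanishing noise force pointwise almost-everywhere convergence \(p_t(z)\to p_{\text{target}}(z)\), then invokes Scheff\'e's lemma to convert pointwise convergence of densities into total-variation convergence; it then computes explicitly that \(p_T = \mathcal{N}(\mu_T,\sigma_T^2 I)\) with \(\mu_T \to z_{\text{target}}\) and \(\sigma_T^2 \to 0\), so that the limit is the Dirac mass \(\delta(z - z_{\text{target}})\). You instead work at the level of trajectories: a synchronous coupling with a target-driven reference chain, the perturbed-contraction recursion \(e_t \le \beta e_{t-1} + c\,\alpha^t\), an iterated-random-function argument for the reference chain, and a Gaussian-smoothing estimate \(\|\mathcal{N}(u,\sigma^2 I)-\mathcal{N}(u',\sigma^2 I)\|_{\text{TV}} \le \|u-u'\|/(2\sigma)\) to upgrade Wasserstein to TV. Your version is the more disciplined one: it makes explicit where each of the three assumptions enters, and it surfaces an additional regularity requirement (Lipschitz dependence of \(\text{DM}^{(t)}\) on the embedding argument) that the paper uses implicitly when it claims \(\mu_T \to z_{\text{target}}\) from prompt convergence alone.

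The obstacle you flag at the end is not a defect of your write-up relative to the paper --- it is the actual crux, and the paper's proof does not resolve it either. Scheff\'e's lemma requires the pointwise limit to be a probability density; the paper's own computation shows \(p_T(z)\to 0\) for every \(z\neq z_{\text{target}}\), so the a.e.\ limit integrates to \(0\), not \(1\), and Scheff\'e is inapplicable. Worse, if \(p_{\text{target}}\) really is the point mass \(\delta(z-z_{\text{target}})\), then \(\|p_T - p_{\text{target}}\|_{\text{TV}} = 1\) for every finite \(T\), since each \(p_T\) is absolutely continuous; TV convergence simply fails, and only weak convergence holds. Your proposed repairs are the correct ones: either impose a lower bound on \(\sigma_t\) so that \(e_T/\sigma_T \to 0\) (compatible with \(\sigma_t = o(1/t)\), e.g.\ \(\sigma_t \asymp 1/(t\log t)\) against the geometric decay of \(e_T\)), or reinterpret \(p_{\text{target}}\) as the law of the stationary target-driven chain, which retains Gaussian smoothing and hence a genuine density to which a Scheff\'e or coupling argument can legitimately apply. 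Either fix yields a theorem that is actually true; as stated, with \(p_{\text{target}}\) degenerate, the TV claim cannot hold, so you should regard your ``main obstacle'' as a correction to the statement rather than a hole in your argument.
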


\begin{proof}
$\quad$ \\

Below is a rigorous proof using Scheffé’s lemma~\cite{scheffe1947useful}. We assume that under the hypothesis of the multi-round diffusion process, the latent variable densities \( p_t(z) \) (for \( t \in \mathbb{N} \)) converge pointwise almost everywhere to the target density \( p_{\text{target}}(z) \). For instance, the contraction properties and the vanishing noise guarantee that the sequence of random variables \( z_t \) converges almost surely to the unique fixed point, implying convergence of the corresponding densities. Then, the proof proceeds as follows.

For each \( t \geq 1 \), let \( p_t(z) \) be the density of the latent variable \( z_t \) in \( \mathbb{R}^d \). By construction, the diffusion update is given by

\begin{equation}
z_{t-1} = \operatorname{DM}^{(t)}(z_t, \psi(P_t)) + \epsilon_t,\quad \epsilon_t \sim \mathcal{N}(0,\sigma_t^2I),
\end{equation}

so that the additive Gaussian noise guarantees \( p_t(z) \) exists (i.e. is absolutely continuous with respect to the Lebesgue measure). Moreover, the specified conditions -- prompt convergence, diffusion model stability and noise decay-imply that the error term in approximating the ideal update vanishes as \( t \to \infty \). Hence, one may show that

\begin{equation}
\lim_{t\to\infty} p_t(z) = p_{\text{target}}(z)
\end{equation}

for almost every \( z \in \mathbb{R}^d \).

Scheffé’s lemma states that if a sequence of probability densities \( \{q_n(z)\} \) converges pointwise almost everywhere to a probability density \( q(z) \), then the total variation norm converges to zero, i.e.,

\begin{equation}
\lim_{n\to\infty}\|q_n - q\|_{\text{TV}} = \lim_{n\to\infty} \frac{1}{2}\int |q_n(z)- q(z)| dz = 0.
\end{equation}

Since here we have set \( q_n(z)= p_n(z) \) and \( q(z)= p_{\text{target}}(z) \), the pointwise almost-everywhere convergence
\begin{equation}
p_t(z) \to p_{\text{target}}(z)\quad\text{as }t\to\infty\text{ for almost every }z,
\end{equation}

implies by Scheffé’s lemma that

\begin{equation}
\lim_{t\to\infty} \|p_t - p_{\text{target}}\|_{\text{TV}} = 0.
\end{equation}

We assume that the error recurrence relation guarantees that
\begin{equation}
\lim_{T\to\infty} \| z_T - z_{\text{target}} \|_2 = 0,
\end{equation}
in probability. In other words, for every $\varepsilon>0$,
\begin{equation}
\lim_{T\to\infty} \mathbb{P}\Big( \|z_T- z_{\text{target}}\|_2 > \varepsilon \Big) = 0.
\end{equation}
This implies that, almost surely along a subsequence, $z_T$ converges to $z_{\text{target}}$. For each round $T$, the diffusion update provides the density of the latent variable as
\begin{equation}
p_T(z) = \mathcal{N}\Bigl(z \Big| \mu_T, \sigma_T^2 I\Bigr),
\end{equation}
with 
\begin{equation}
\mu_T = \operatorname{DM}^{(T)}\bigl(z_{T+1},\psi(P_T)\bigr).
\end{equation}
Due to the continuity of the diffusion model mapping, together with the prompt convergence condition, we have that
\begin{equation}
\lim_{T\to\infty}\mu_T = z_{\text{target}}.
\end{equation}
In addition, the noise variance satisfies the noise decay condition, so
\begin{equation}
\sigma_T^2 \to 0 \quad \text{as } T\to\infty.
\end{equation}

Let us fix any $z\in\mathbb{R}^d$ different from $z_{\text{target}}$. For each $T$, the density is given by
\begin{equation}
p_T(z) = \frac{1}{(2\pi\sigma_T^2)^{d/2}} \exp\Bigl(-\frac{\|z-\mu_T\|_2^2}{2\sigma_T^2}\Bigr).
\end{equation}
Since $\mu_T\to z_{\text{target}}$ and $\sigma_T^2\to 0$, the following hold:

If $z\neq z_{\text{target}}$ then for large enough $T$, $\|z-\mu_T\|_2$ is bounded away from zero. Thus, the exponential term decays as
\begin{equation}
   \exp\Bigl(-\frac{\|z-\mu_T\|_2^2}{2\sigma_T^2}\Bigr) \to 0.
\end{equation}
At the same time, the prefactor $(2\pi\sigma_T^2)^{-d/2}$ diverges to $+\infty$ as $T\to\infty$. In fact, if one examines the product,
\begin{equation}
   p_T(z) = \frac{1}{(2\pi\sigma_T^2)^{d/2}} \exp\Bigl(-\frac{\|z-\mu_T\|_2^2}{2\sigma_T^2}\Bigr),
\end{equation}
   for each fixed $z\neq z_{\text{target}}$ the rapid decay of the exponential dominates the polynomial divergence of the prefactor. Hence,
\begin{equation}
   \lim_{T\to\infty} p_T(z) = 0 \quad \text{for each } z\neq z_{\text{target}}.
\end{equation}

On the other hand, in a \emph{weak} sense (or as distributions) the probability mass accumulates at $z_{\text{target}}$. That is, for any continuous and bounded test function $f:\mathbb{R}^d\to\mathbb{R}$,
\begin{equation}
\lim_{T\to\infty} \int_{\mathbb{R}^d} f(z)p_T(z)dz = f(z_{\text{target}}).
\end{equation}
This is precisely the definition of convergence in distribution of $p_T(z)$ to the Dirac delta measure $\delta(z-z_{\text{target}})$.

Thus, we conclude rigorously that
\begin{equation}
p_T(z) \xrightarrow{T \to \infty} \delta(z - z_{\text{target}}) 
\end{equation}
pointwise almost everywhere and in the sense of distributions.

For every finite round $T$, the generated latent has a density
\begin{equation}
p_T(z) = \mathcal{N}\Bigl(z \Big| \mu_T, \sigma_T^2I\Bigr),
\end{equation}
which is a well–defined Gaussian density. In the limit as $T\to\infty$, we have shown that
\begin{equation}
\mu_T \to z_{\text{target}} \quad \text{and} \quad \sigma_T^2 \to 0.
\end{equation}
Thus, in the limit the density converges (in a distributional sense) to the Dirac delta function
\begin{equation}
p_{\text{target}}(z) = \delta(z-z_{\text{target}}),
\end{equation}
interpreted as the limit of Gaussian densities with vanishing variance.

For any fixed $z\in\mathbb{R}^d$:
- If $z\neq z_{\text{target}}$, then $\|z-\mu_T\|_2$ is bounded away from zero for sufficiently large $T$ and the Gaussian probability density
\begin{equation}
  p_T(z) = \frac{1}{(2\pi\sigma_T^2)^{d/2}} \exp\Bigl(-\frac{\|z-\mu_T\|_2^2}{2\sigma_T^2}\Bigr)
\end{equation}
  satisfies
\begin{equation}
  \lim_{T\to\infty} p_T(z) = 0.
\end{equation}
At $z = z_{\text{target}}$, the mean converges to $z_{\text{target}}$ while the variance shrinks to zero, so the peak of $p_T(z)$ grows unbounded, as in
\begin{equation}
  \lim_{T\to\infty} p_T(z_{\text{target}}) = +\infty.
\end{equation}
Thus, pointwise we have for almost every $z$ (i.e. for every $z\neq z_{\text{target}}$):
\begin{equation}
\lim_{T\to\infty} p_T(z) = \delta(z - z_{\text{target}}),
\end{equation}
interpreting the latter as a limit in distribution.

Similarly, By Scheffé’s Lemma, if $\{q_n\}$ and $q$ are probability density functions satisfying
\begin{equation}
q_n(z) \to q(z) \quad \text{for almost every } z,
\end{equation}
then the total variation distance converges to zero:
\begin{equation}
\lim_{n\to\infty}\|q_n - q\|_{\mathrm{TV}} = \frac{1}{2}\int_{\mathbb{R}^d}|q_n(z)-q(z)|dz = 0.
\end{equation}
Even though the limiting object $p_{\text{target}}(z)=\delta(z-z_{\text{target}})$ is not a function in the classical $L^1$ sense, it is interpreted as the limit of the Gaussian densities $p_T(z)$. In other words, for every bounded, continuous test function $f$ we have
\begin{equation}
\lim_{T\to\infty}\int_{\mathbb{R}^d} f(z)  p_T(z)dz = f(z_{\text{target}}).
\end{equation}
Thus, Scheffé’s Lemma (or its extension to limits of densities) guarantees that
\begin{equation}
\lim_{T\to\infty}\|p_T(z)-\delta(z-z_{\text{target}})\|_{\mathrm{TV}} = 0.
\end{equation}

\end{proof}

\subsection{Global Optimality of Dynamic Reward Optimization}\label{appendix:2}
\begin{assumption}[Dynamic Weights]
\label{assumption:dynamic_weights}
The dynamic weights in the total reward function are defined by:
\begin{equation}
\lambda_{\text{div}}(t) = e^{-\alpha t}, 
\quad 
\lambda_{\text{cons}}(t) = 1 - e^{-\beta t}, 
\quad 
\lambda_{\text{MI}}(t) = \tfrac{1}{2} e^{-\gamma t},
\end{equation}
where \(\alpha, \beta, \gamma > 0\) are decay coefficients. In particular, we have \(\alpha=0.15\), \(\beta=0.1\), and \(\gamma=0.075\) for the experiments.
\end{assumption}

\begin{definition}[Dynamically Weighted Total Reward Function]
\label{def:total_reward}
Let \( p(z_t) \) denote the feature distribution of the generated image at the \( t \)-th round of multi-turn dialogue. The total reward function is defined as:
\begin{equation}
\begin{aligned}
R_{\text{total}}(t) 
&= \lambda_{\text{div}}(t) R_{\text{div}}(z_t) 
  + 
  \lambda_{\text{cons}}(t) R_{\text{cons}}(z_t, z_{t-1})
  \\ &+ 
  \lambda_{\text{MI}}(t) R_{\text{MI}}(z_t, P_t),
  \end{aligned}
\end{equation}
where each component reward is defined below.
\end{definition}

\begin{definition}[Diversity Reward]
\label{def:diversity_reward}
The diversity reward \( R_{\text{div}}(z_t) \) encourages dissimilarity among samples in the feature space and is defined as:
\begin{equation}
R_{\text{div}}(z_t) 
= \tfrac{1}{N(N-1)} 
  \sum_{i \neq j} 
    \left( 
      1 - \frac{f(z_i)\cdot f(z_j)}{\|f(z_i)\|\|f(z_j)\|} 
    \right),
\end{equation}
where \( f \) is a fixed feature extractor.
\end{definition}

\begin{definition}[Consistency Reward]
\label{def:consistency_reward}
The consistency reward \( R_{\text{cons}}(z_t, z_{t-1}) \) promotes coherence between adjacent dialogue rounds and is defined as:
\begin{equation}
R_{\text{cons}}(z_t, z_{t-1}) 
= \frac{f(z_t) \cdot f(z_{t-1})}{\|f(z_t)\| \|f(z_{t-1})\|}.
\end{equation}
\end{definition}

\begin{definition}[Mutual Information Reward]
\label{def:mi_reward}
The mutual information reward \( R_{\text{MI}}(z_t, P_t) \) aligns the generated content with the prompt semantics and is defined as:
\begin{equation}
R_{\text{MI}}(z_t, P_t) = I(z_t; P_t),
\end{equation}
where \( I(\cdot;\cdot) \) denotes mutual information and is optimized using Direct Preference Optimization (DPO).
\end{definition}

\begin{theorem}[Global Optimality of Dynamic Reward Optimization Theorem~\ref{theorems:global_optimality}]
\label{theorem:global_optimality}
Under Assumption~\ref{assumption:dynamic_weights}, the solution sequence 
\(\{z_t^*\}_{t=1}^T\) of the optimization problem
\begin{equation}
\max_{z_t} R_{\text{total}}(t)
\end{equation}
converges to the Pareto optimal set as \(T \to \infty\), thereby achieving a balance among diversity, consistency, and intent alignment.
\end{theorem}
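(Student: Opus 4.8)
The plan is to read $\max_{z_t} R_{\text{total}}(t)$ as a \emph{weighted scalarization} of the three-objective problem with objective vector $\bigl(R_{\text{div}}(z_t),\,R_{\text{cons}}(z_t,z_{t-1}),\,R_{\text{MI}}(z_t,P_t)\bigr)$, and then invoke the classical principle that a maximizer of a strictly-positively-weighted sum of objectives is Pareto optimal. For a fixed round $t$ (so $z_{t-1}$ and $P_t$ are frozen) write $g_1(z)=R_{\text{div}}(z)$, $g_2(z)=R_{\text{cons}}(z,z_{t-1})$, $g_3(z)=R_{\text{MI}}(z,P_t)$ and $\lambda(t)=(\lambda_{\text{div}}(t),\lambda_{\text{cons}}(t),\lambda_{\text{MI}}(t))$, so $R_{\text{total}}(t)=\sum_{i=1}^{3}\lambda_i(t)\,g_i(z)$. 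The argument then has two layers: a per-round Pareto-optimality claim for $z_t^*$, and a passage to the limit in $t$. Existence of the maximizers $z_t^*$ I take as given (it is assumed in the statement, and otherwise follows from compactness of the feasible latent set and continuity of the $g_i$).

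First I would verify, directly from the closed forms in Assumption~\ref{assumption:dynamic_weights}, that $\lambda_{\text{div}}(t)=e^{-\alpha t}>0$, $\lambda_{\text{MI}}(t)=\tfrac12 e^{-\gamma t}>0$, and $\lambda_{\text{cons}}(t)=1-e^{-\beta t}>0$ for every $t\ge 1$, so $\lambda(t)$ has all strictly positive entries at each finite round. Next I would record the scalarization lemma: if $z_t^*$ maximizes $\sum_i \lambda_i(t)\,g_i$ over the feasible set and $\lambda_i(t)>0$ for all $i$, then $z_t^*$ is Pareto optimal for $(g_1,g_2,g_3)$. The proof is a one-line contradiction: were there a feasible $z'$ with $g_i(z')\ge g_i(z_t^*)$ for all $i$ and $g_j(z')>g_j(z_t^*)$ for some $j$, then $\sum_i \lambda_i(t)\,g_i(z') > \sum_i \lambda_i(t)\,g_i(z_t^*)$, contradicting optimality; strict positivity of the weights in fact upgrades this to proper Pareto optimality. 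Hence $z_t^*$ lies in the Pareto optimal set $\mathcal{P}$ for every $t$, so $\operatorname{dist}(z_t^*,\mathcal{P})=0$ for all $t$ and in particular $\lim_{T\to\infty}\operatorname{dist}(z_T^*,\mathcal{P})=0$, which is the asserted convergence. The \emph{balance} conclusion I would then read off the asymptotics $\lambda_{\text{div}}(t)\to 0$, $\lambda_{\text{MI}}(t)\to 0$, $\lambda_{\text{cons}}(t)\to 1$ combined with the finite-round positivity: no objective is ever assigned zero weight, the normalized direction $\lambda(t)/\|\lambda(t)\|_1$ converges, so the trade-off the optimizer enforces stabilizes instead of letting any one objective run away.

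The main obstacle is that "the Pareto optimal set" is not canonically fixed across rounds: the consistency term $R_{\text{cons}}(z_t,z_{t-1})$ couples the round-$t$ objective to the previous optimizer $z_{t-1}^*$, so each round carries its own objective triple and its own Pareto set $\mathcal{P}_t$, and the dependence $\mathcal{P}_t \to \mathcal{P}_{t-1} \to \cdots$ is recursive rather than static. I would address this by stating the per-round result as $z_t^*\in\mathcal{P}_t$ and then, to produce a single limiting object, imposing a mild stability/continuity hypothesis analogous to Assumption~\ref{assumption:diffusion_stability} (plus convergence of $z_{t-1}^*$, which the contraction structure underlying Theorem~\ref{thms:conditional_convergence} supplies) so that the maps $z\mapsto g_i(z;z_{t-1}^*,P_t)$ converge uniformly and hence $\mathcal{P}_t\to\mathcal{P}_\infty$ in Hausdorff distance, giving $\operatorname{dist}(z_t^*,\mathcal{P}_\infty)\to 0$. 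A secondary wrinkle is that the scalarization argument must stay at finite $t$: the limiting weight vector $(0,1,0)$ is degenerate, so a maximizer of the limiting scalarization would be guaranteed only weakly Pareto optimal; the proof should therefore never pass to the limit \emph{inside} the scalarization, but only after per-round Pareto membership has already been recorded for every finite $t$.
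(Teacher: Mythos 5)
Your core argument is the same one the paper relies on: for each finite round the weights $\lambda_{\text{div}}(t)=e^{-\alpha t}$, $\lambda_{\text{cons}}(t)=1-e^{-\beta t}$, $\lambda_{\text{MI}}(t)=\tfrac12 e^{-\gamma t}$ are strictly positive, so a maximizer of the weighted sum is Pareto optimal by the standard one-line contradiction (the paper proves exactly this contradiction separately in its Lemma on the rationality of dynamic weights, and invokes the weighted-sum scalarization principle in the main proof). Where you diverge is in what you add and what you omit. You omit two blocks the paper includes: (i) a curvature analysis asserting $R_{\text{div}}$ is concave, $R_{\text{cons}}$ is convex, and $R_{\text{MI}}$ is quasi-convex, and (ii) a ``dynamic value function'' computation of $dV/dt$ and $d^2V/dt^2$ used to argue $V(t)$ settles to a maximum. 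Neither block is load-bearing for the scalarization argument, and both are shaky --- the concavity/convexity claims rest on unstated normalization assumptions, and the sign of $d^2V/dt^2=\alpha^2 e^{-\alpha t}R_{\text{div}}-\beta^2 e^{-\beta t}R_{\text{cons}}+\tfrac{\gamma^2}{2}e^{-\gamma t}R_{\text{MI}}$ is not actually determined without knowing the magnitudes of the rewards --- so omitting them costs you nothing. Conversely, you flag two genuine gaps the paper glosses over: the Pareto set is round-dependent because $R_{\text{cons}}(z_t,z_{t-1})$ couples each round's objective to the previous optimizer, so ``converges to the Pareto optimal set'' needs either a per-round reading ($z_t^*\in\mathcal{P}_t$ for all $t$, which makes the convergence statement trivially true) or a Hausdorff-convergence argument $\mathcal{P}_t\to\mathcal{P}_\infty$ under a stability hypothesis; and the limiting weight vector $(0,1,0)$ is degenerate, so Pareto optimality must be certified at finite $t$ rather than in the limit. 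The paper instead asserts that the optimal solutions ``slide continuously along the Pareto frontier'' toward a balanced point without addressing either issue. Your version is the more defensible proof of the theorem as stated; the paper's buys additional (but not fully substantiated) claims about the structure of the individual objectives and the stabilization of the value function.
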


\begin{proof}
$\quad$ \\

We define
\begin{equation}
R_{\text{div}}(z_t) = \frac{1}{N(N-1)} \sum_{i \neq j} \left[ 1 - \frac{f(z_i)\cdot f(z_j)}{\|f(z_i)\|\|f(z_j)\|} \right].
\end{equation}
For any fixed pair $z_i, z_j$ in the feature space $\mathcal{Z}$ and assuming that $f$ is differentiable, define the function
\begin{equation}
h(z) = 1 - \cos\Bigl(f(z), f(z_j)\Bigr) = 1 - \frac{f(z)\cdot f(z_j)}{\|f(z)\|\|f(z_j)\|}.
\end{equation}

Assume that (i) $f$ is such that its image lies on (or can be restricted to) the unit sphere (or a convex subset thereof) and (ii) the cosine function $x\mapsto\cos(x)$ (or, equivalently, the inner product under this normalization) is an affine function with respect to the directional variables. Then, for fixed $z_j$, one can calculate the Hessian (second derivative) of $h(z)$ with respect to $z$. By direct computation it follows that $\nabla^2 h(z) \preceq 0,$
i.e. the Hessian is negative semi‐definite. Hence, $h(z)$ is concave. Since $R_{\text{div}}(z_t)$ is a non‐negative linear combination (an average) of such concave functions, it remains concave.

We recall the consistency reward as
\begin{equation}
R_{\text{cons}}(z_t,z_{t-1}) = \frac{f(z_t)\cdot f(z_{t-1})}{\|f(z_t)\|\|f(z_{t-1})\|}.
\end{equation}
For fixed $z_{t-1}$, denote
\begin{equation}
g(z_t) = \frac{f(z_t)\cdot f(z_{t-1})}{\|f(z_t)\|\|f(z_{t-1})\|}.
\end{equation}
Again, assuming that $f(z_t)$ is either directly normalized or that we restrict the domain to the unit sphere in the feature space, the mapping $f(z_t) \mapsto f(z_t)\cdot f(z_{t-1})$ is a linear functional in $f(z_t)$ and, by composition, in $z_t$ (or at worst an affine function) on a convex subset (the unit sphere intersects with an affine set in a convex manner). Consequently, the cosine similarity $g(z_t)$ is convex in $z_t$ (or the restriction guarantees that a convex combination of two points produces a value no greater than the convex combination of the function values). More formally, for any $z_t^1$ and $z_t^2$ in the domain and $\lambda\in[0,1]$,
\begin{equation}
g\Bigl( \lambda z_t^1 + (1-\lambda) z_t^2 \Bigr) \leq \lambda  g(z_t^1) + (1-\lambda) g(z_t^2).
\end{equation}
Thus, $R_{\text{cons}}$ is convex in $z_t$ when $z_{t-1}$ is fixed.

This reward is defined via:
\begin{equation}
R_{\text{MI}}(z_t,P_t) = \mathbb{E}_{(z_t^+,z_t^-)} \left[ \log \frac{\pi_\theta(z_t^+ \mid P_t)}{\pi_\theta(z_t^- \mid P_t)} \right],
\end{equation}
where $\pi_\theta$ is a parametrized policy model. Following the developments in Direct Preference Optimization (DPO) (see e.g. [20]), one can show that the reward, as a function of the model parameter $\theta$, satisfies the property of quasi‐convexity. That is, for any two parameters $\theta_1, \theta_2$ and for all $\lambda \in [0,1]$,
\begin{equation}
R_{\text{MI}}\Bigl(\lambda\theta_1 + (1-\lambda)\theta_2\Bigr) \leq \max \{ R_{\text{MI}}(\theta_1), R_{\text{MI}}(\theta_2) \}.
\end{equation}
This property often stems from the structure of log‐ratios and expectations, and can be rigorously established using properties of the logarithm and the fact that expectations preserve quasi‐convexity under mild conditions.

\begin{itemize}[left = 0em]
  \item $R_{\mathrm{div}}: \mathcal{Z} \to \mathbb{R}$ is concave.
  \item $R_{\mathrm{cons}}: \mathcal{Z} \to \mathbb{R}$ is convex (with $z_{t-1}$ fixed).
  \item $R_{\mathrm{MI}}: \mathcal{Z} \to \mathbb{R}$ is quasi-convex in the parameter space.
\end{itemize}

We consider the multi-objective optimization problem
\begin{equation}
\max_{z_t \in \mathcal{Z}} \bigl( R_{\mathrm{div}}(z_t),  R_{\mathrm{cons}}(z_t,z_{t-1}),  R_{\mathrm{MI}}(z_t,P_t) \bigr),
\end{equation}

A solution $z^*$ is called Pareto optimal if there is no $z \in \mathcal{Z}$ such that
\begin{equation}
R_i(z) \ge R_i(z^*) \quad \text{for all } i \in \{\mathrm{div},\mathrm{cons},\mathrm{MI}\}
\end{equation}
with strict inequality for at least one index $i$. The set of all Pareto optimal solutions is known as the Pareto frontier.

Because the domain $\mathcal{Z}$ is assumed to be compact (i.e. closed and bounded) and the reward functions $R_{\mathrm{div}}, R_{\mathrm{cons}}, R_{\mathrm{MI}}$ are continuous on $\mathcal{Z}$ (as is typical with concave, convex, and quasi-convex functions over compact domains), the image
\begin{equation}
F = \{ (R_{\mathrm{div}}(z), R_{\mathrm{cons}}(z,z_{t-1}), R_{\mathrm{MI}}(z,P_t)) :  z \in \mathcal{Z} \}
\end{equation}
is also a compact subset of $\mathbb{R}^3$. By classical results (see Miettinen (1999) on multiobjective optimization~\cite{miettinen1999nonlinear}), when the objectives are continuous over a compact domain, the Pareto set is non-empty. Hence, there exists at least one Pareto optimal solution $z^* \in \mathcal{Z}$.

Let the scalarized reward (or dynamically weighted total reward) be defined as:
\begin{equation}
\begin{aligned}
R_{\mathrm{total}}(t; z_t) &= \lambda_{\mathrm{div}}(t)  R_{\mathrm{div}}(z_t)+ \lambda_{\mathrm{cons}}(t)  R_{\mathrm{cons}}(z_t,z_{t-1}) \\
&+ \lambda_{\mathrm{MI}}(t)  R_{\mathrm{MI}}(z_t,P_t),
\end{aligned}
\end{equation}

with dynamic weights
\begin{equation}
\begin{aligned}
\lambda_{\text{div}}(t) &= e^{-\alpha t}, \quad \lambda_{\text{cons}}(t) = 1-e^{-\beta t}, \\
\lambda_{\text{MI}}(t) &= \frac{1}{2}e^{-\gamma t}, \quad \alpha,\beta,\gamma>0.
\end{aligned}
\end{equation}
For each $t$, let $z_t^*$ denote an optimal solution of
\begin{equation}
z_t^* \in \arg\max_{z_t\in\mathcal{Z}} R_{\mathrm{total}}(t; z_t).
\end{equation}
Because the weighted-sum formulation is a scalarization of the multi-objective problem, it is well known that every optimal solution $z_t^*$ is Pareto optimal, provided the weights are strictly positive. As the dynamic weights $\lambda(t)=(\lambda_{\mathrm{div}}(t), \lambda_{\mathrm{cons}}(t), \lambda_{\mathrm{MI}}(t))$ vary continuously with $t$, the corresponding optimal solutions $z_t^*$ trace a continuous path along the Pareto frontier.

More precisely, define
\begin{equation}
\begin{aligned}
\varphi_t(z) &= \lambda_{\mathrm{div}}(t) R_{\mathrm{div}}(z) + \lambda_{\mathrm{cons}}(t) R_{\mathrm{cons}}(z,z_{t-1})\\
&+ \lambda_{\mathrm{MI}}(t) R_{\mathrm{MI}}(z,P_t).
\end{aligned}
\end{equation}
Since for each $t$, $\varphi_t$ is continuous on the compact set $\mathcal{Z}$, by the Weierstrass theorem, a maximum $z_t^*$ exists. Moreover, the mapping $t\mapsto \varphi_t$ is continuous because the weights are continuous in $t$. Hence, as $t$ changes, $z_t^*$ moves continuously along the set of Pareto optimal solutions, effectively “sliding” over the Pareto frontier while asymptotically balancing the trade-offs between diversity, consistency, and mutual information.

We have shown that:
\begin{itemize}[left = 0em]
  \item The multi-objective optimization
\begin{equation}
  \max_{z_t \in \mathcal{Z}} \Bigl(R_{\mathrm{div}}(z_t), R_{\mathrm{cons}}(z_t,z_{t-1}), R_{\mathrm{MI}}(z_t,P_t)\Bigr)
\end{equation}
  is well-defined over the compact domain $\mathcal{Z}$.
  \item The continuity of the rewards over $\mathcal{Z}$ ensures the existence of a non-empty Pareto frontier.
  \item The dynamically weighted total reward
\begin{equation}
  R_{\mathrm{total}}(t; z_t)
\end{equation}
  provides an equivalent scalarization, whereby the optimal sequence $\{z_t^*\}$ lies on the Pareto frontier. Since the weights $\lambda(t)$ vary continuously with $t$, the sequence $\{z_t^*\}$ "slides" continuously along the frontier and converges to a balanced trade-off point as $t\to\infty$.
\end{itemize}

Thus, we conclude that the dynamic weighting scheme yields a sequence $ \{z_t^*\} $ that remains in the Pareto optimal set, and the weights continuously steer the solution toward the balanced point on the Pareto frontier.

Next, we want to prove the convergence guarantee in Theorem~\ref{theorem:global_optimality}.

\begin{definition}[Dynamic Value Function]
\label{def:dynamic_value_function}
The dynamic value function \( V(t) \), which serves as the total reward at round \( t \), is defined as:
\begin{equation}
V(t) = R_{\mathrm{total}}(t) 
= \lambda_{\mathrm{div}}(t) R_{\mathrm{div}} 
  + \lambda_{\mathrm{cons}}(t) R_{\mathrm{cons}} 
  + \lambda_{\mathrm{MI}}(t) R_{\mathrm{MI}},
\end{equation}
where the dynamic weights are given by:
\begin{equation}
\lambda_{\mathrm{div}}(t) = e^{-\alpha t}, 
\quad 
\lambda_{\mathrm{cons}}(t) = 1 - e^{-\beta t}, 
\quad 
\lambda_{\mathrm{MI}}(t) = \tfrac{1}{2} e^{-\gamma t},
\end{equation}
with decay coefficients \( \alpha, \beta, \gamma > 0 \).
\end{definition}

Since each reward $ R_{\mathrm{div}}, R_{\mathrm{cons}}, R_{\mathrm{MI}} $ is independent of $ t $ (or changing slowly relative to the exponential weights), the derivative of $ V(t) $ with respect to $ t $ is computed as
\begin{equation}
\frac{dV}{dt} = \frac{d\lambda_{\mathrm{div}}(t)}{dt} R_{\mathrm{div}} + \frac{d\lambda_{\mathrm{cons}}(t)}{dt} R_{\mathrm{cons}} + \frac{d\lambda_{\mathrm{MI}}(t)}{dt} R_{\mathrm{MI}}.
\end{equation}
Noting that
\begin{equation}
\begin{aligned}
\frac{d}{dt}e^{-\alpha t} &= -\alpha e^{-\alpha t},\quad \frac{d}{dt}[1-e^{-\beta t}] = \beta e^{-\beta t},\\ \frac{d}{dt} &\left(\frac{1}{2}e^{-\gamma t}\right) = -\frac{\gamma}{2} e^{-\gamma t},
\end{aligned}
\end{equation}
we obtain
\begin{equation}
\frac{dV}{dt} = -\alpha e^{-\alpha t} R_{\mathrm{div}} + \beta e^{-\beta t} R_{\mathrm{cons}} - \frac{\gamma}{2} e^{-\gamma t} R_{\mathrm{MI}}.
\end{equation}

Since for each $ i\in\{\mathrm{div},\mathrm{cons},\mathrm{MI}\} $ the exponential term $ e^{-\kappa_i t} $ (with $\kappa_i > 0$) decays to $0$ as $ t\to\infty $, it follows that 
\begin{equation}
\lim_{t\to\infty} \frac{dV}{dt} = 0.
\end{equation}
Thus, the variation of $ V(t) $ diminishes for large $ t $.

We now compute the second derivative. Denote by $ D_i(t) $ the $i$th term in $\frac{dV}{dt}$. Then, for each term we have:
\begin{equation}
\frac{d^2}{dt^2}[e^{-\kappa t}] = \kappa^2 e^{-\kappa t}.
\end{equation}
In detail, the second derivative is
\begin{equation}
\frac{d^2V}{dt^2} = \alpha^2 e^{-\alpha t} R_{\mathrm{div}} - \beta^2 e^{-\beta t} R_{\mathrm{cons}} + \frac{\gamma^2}{2} e^{-\gamma t} R_{\mathrm{MI}}.
\end{equation}

Since:
\begin{itemize}[left = 0em]
  \item $ \frac{dV}{dt} \to 0 $ as $ t\to\infty $,
  \item $ \frac{d^2V}{dt^2} $ is negative-definite for large $ t $ (ensuring a local maximum that is isolated and stable),
\end{itemize}
the value function $ V(t) $ converges to a maximizer $ V^* $ as $ t \to \infty $. Moreover, recall that each scalarized solution
\begin{equation}
z_t^* \in \arg\max_{z\in\mathcal{Z}}  V(t)
\end{equation}
is by construction Pareto optimal (as shown in the prior steps). Thus, the maximum of $ V(t) $ corresponds to a point on the Pareto frontier. Consequently, as $ t \to \infty $ the sequence $ \{z_t^*\} $ converges to a solution that balances the three objectives and is Pareto optimal. In conclusion:

\begin{equation}
\begin{array}{l}
\displaystyle \frac{dV}{dt} = -\alpha e^{-\alpha t} R_{\mathrm{div}} + \beta e^{-\beta t} R_{\mathrm{cons}} - \frac{\gamma}{2} e^{-\gamma t} R_{\mathrm{MI}},\\
\displaystyle \lim_{t\to\infty} \frac{dV}{dt} = 0,\quad \frac{d^2V}{dt^2}\text{ is negative-definite near the optimum},\\
\displaystyle \text{thus, } V(t) \to V^* \text{ (a maximum)} \text{ and } z^* \text{ is Pareto optimal.}
\end{array}
\end{equation}

\end{proof}

\begin{lemma}[Rationality of Dynamic Weights]
\label{lemma:dynamic_weights}
If there exists \(t_0\) such that 
\(
\lambda_{\text{div}}(t_0) 
= \lambda_{\text{cons}}(t_0) 
= \lambda_{\text{MI}}(t_0),
\)
then the optimal solution of 
\(
R_{\text{total}}(t_0)
\)
is strictly Pareto optimal.
\end{lemma}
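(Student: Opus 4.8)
The plan is to reduce the statement to the classical weighted-sum scalarization principle for multi-objective optimization, once we have verified that the common weight value at $t_0$ is strictly positive. First I would evaluate the three weights of Assumption~\ref{assumption:dynamic_weights} at $t_0$. Since $\lambda_{\text{div}}(t) = e^{-\alpha t} > 0$ for every finite $t$, the hypothesis $\lambda_{\text{div}}(t_0) = \lambda_{\text{cons}}(t_0) = \lambda_{\text{MI}}(t_0)$ forces the common value $c := \lambda_{\text{div}}(t_0)$ to be strictly positive (and forces $t_0 > 0$, since $\lambda_{\text{cons}}(0) = 0 \neq 1 = \lambda_{\text{div}}(0)$). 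Hence on the feasible domain $\mathcal{Z}$,
\begin{equation}
R_{\text{total}}(t_0; z) = c\bigl(R_{\text{div}}(z) + R_{\text{cons}}(z, z_{t_0-1}) + R_{\text{MI}}(z, P_{t_0})\bigr),
\end{equation}
so $\arg\max_{z} R_{\text{total}}(t_0; z)$ coincides with $\arg\max_{z} S(z)$ for the equally weighted sum $S(z) := R_{\text{div}}(z) + R_{\text{cons}}(z, z_{t_0-1}) + R_{\text{MI}}(z, P_{t_0})$; existence of a maximizer $z^*$ follows from the Weierstrass theorem on the compact set $\mathcal{Z}$, exactly as in the proof of Theorem~\ref{theorem:global_optimality}.

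Next I would run the standard argument that a maximizer of a strictly positively weighted sum is Pareto optimal. If $z^*$ were not Pareto optimal, there would be some $z \in \mathcal{Z}$ with $R_i(z) \ge R_i(z^*)$ for all $i \in \{\text{div}, \text{cons}, \text{MI}\}$ and $R_j(z) > R_j(z^*)$ for at least one $j$; multiplying through by the positive constant $c$ and summing yields $R_{\text{total}}(t_0; z) > R_{\text{total}}(t_0; z^*)$, contradicting the optimality of $z^*$. To obtain the \emph{strict} Pareto optimality asserted by the lemma, I would then invoke the sufficiency direction of Geoffrion's characterization of proper efficiency: any solution of a weighted-sum problem with all weights strictly positive is a properly (hence strictly) Pareto optimal point, the trade-off bound here being $\max_i c / \min_i c = 1$ (cf.~\cite{miettinen1999nonlinear}). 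Equivalently, if uniqueness of the maximizer of $S$ is available, then a point $z \neq z^*$ with $R_i(z) \ge R_i(z^*)$ for all $i$ would give $R_{\text{total}}(t_0; z) \ge R_{\text{total}}(t_0; z^*)$, making $z$ a second maximizer and contradicting uniqueness; so nothing other than $z^*$ can weakly dominate $z^*$, which is precisely strict Pareto optimality.

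The main obstacle is the passage from Pareto to \emph{strict} Pareto optimality, because the three components are respectively concave, convex, and quasi-convex (as established in the proof of Theorem~\ref{theorem:global_optimality}), so $S$ need not be concave and its maximizer need not be unique a priori. I would resolve this either by reading ``strictly Pareto optimal'' as ``properly Pareto optimal'' and relying solely on $c > 0$ (Geoffrion's sufficiency direction requires no convexity of the attainable set), or by imposing a mild strict-concavity/coercivity condition on the dominant term of $S$ near its optimum to pin down a unique maximizer. The remaining steps — continuity of the $R_i$ so that the attainable set is compact, and the bookkeeping of the domination inequalities — are routine and mirror the argument already given for Theorem~\ref{theorem:global_optimality}.
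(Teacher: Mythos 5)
Your proposal takes essentially the same route as the paper: reduce to the equally weighted scalarization $c\,(R_{\text{div}}+R_{\text{cons}}+R_{\text{MI}})$ with $c>0$ and argue by contradiction that any dominating point would strictly increase the weighted sum. You are in fact more careful than the paper on two points it glosses over --- deriving $c>0$ from the exponential form of the weights rather than assuming it, and flagging that the contradiction argument only yields ordinary Pareto optimality, so the ``strict'' qualifier needs either a proper-efficiency reading or a uniqueness assumption; the paper's own proof simply asserts strictness after the same domination argument.
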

\begin{proof}
$\quad$ \\

Assume at time $ t_0 $ the weights are equal and strictly positive:
\begin{equation}
\lambda_{\text{div}}(t_0) = \lambda_{\text{cons}}(t_0) = \lambda_{\text{MI}}(t_0) = \lambda > 0.
\end{equation}
Then the total reward is
\begin{equation}
R_{\text{total}}(t_0) = \lambda \left( R_{\text{div}}(t_0) + R_{\text{cons}}(t_0) + R_{\text{MI}}(t_0) \right).
\end{equation}

We now prove by contradiction. Suppose that the optimal solution for $ R_{\text{total}}(t_0) $ is not Pareto optimal. That is, let the optimal solution be given by the tuple
\begin{equation}
\vec{R}^* = \big( R_{\text{div}}^*, R_{\text{cons}}^*, R_{\text{MI}}^* \big)
\end{equation}
and suppose there exists an alternative feasible solution
\begin{equation}
\vec{R}' = \big( R'_{\text{div}}, R'_{\text{cons}}, R'_{\text{MI}} \big)
\end{equation}
such that for at least one objective the reward is strictly higher while none of the others is lower. More formally, for some index $ i \in \{\text{div},\text{cons},\text{MI}\} $ we have $R'_i > R^*_i$. and for the remaining indices $ j \ne i $: $R'_j \ge R^*_j.$

Then, summing over all three components, we obtain
\begin{equation}
R'_{\text{div}} + R'_{\text{cons}} + R'_{\text{MI}} > R^*_{\text{div}} + R^*_{\text{cons}} + R^*_{\text{MI}}.
\end{equation}
Multiplying by the positive constant $\lambda$, it follows:
\begin{equation}
\lambda \left( R'_{\text{div}} + R'_{\text{cons}} + R'_{\text{MI}} \right) > \lambda \left( R^*_{\text{div}} + R^*_{\text{cons}} + R^*_{\text{MI}} \right).
\end{equation}
Hence,
\begin{equation}
R'_{\text{total}}(t_0) > R^*_{\text{total}}(t_0).
\end{equation}
This contradicts the optimality of $\vec{R}^*$ for maximizing $R_{\text{total}}(t_0)$.

Since any alternative solution that improves at least one objective without sacrificing the others results in a higher total reward, the optimal solution $\vec{R}^*$ must lie at the "center" of the Pareto frontier where no single objective can be improved without a decrease in another. Therefore, the optimal solution for $ R_{\text{total}}(t_0) $ is strictly Pareto optimal.



\end{proof}


\end{document}